\documentclass[onecolumn,journal]{IEEEtran}
\usepackage{cite}
\usepackage[cmex10]{mathtools}
\usepackage{amsfonts,amsthm,amssymb, bbm}
\usepackage{thmtools, thm-restate}
\usepackage{dsfont}
\usepackage{mymacros}
\usepackage{color}
\usepackage{algorithm, algorithmicx,algpseudocode}
\usepackage{fixltx2e}
\usepackage{multirow}
\usepackage{datetime}
\usepackage{enumerate}
\usepackage{mathrsfs}
\usepackage{subcaption}
\usepackage[draft]{hyperref}
\usepackage[capitalize]{cleveref}
\crefname{equation}{}{}

\algnewcommand\algorithmicinput{\textbf{INPUT:}}
\algnewcommand\INPUT{\item[\algorithmicinput]}
\algnewcommand\algorithmicoutput{\textbf{OUTPUT:}}
\algnewcommand\OUTPUT{\item[\algorithmicoutput]}

\usepackage{xspace}
\newcommand*{\eg}{\textit{e}.\textit{g}.\@\xspace}
\newcommand*{\ie}{\textit{i}.\textit{e}.\@\xspace}

\newcommand*{\etal}{\textit{et al}.\@\xspace}

\usepackage{graphicx}
\usepackage[outdir=./figs/converted/]{epstopdf}
\epstopdfsetup{suffix=}
\DeclareGraphicsExtensions{.eps}

\usepackage{tikz}
\usetikzlibrary{cd}
\usetikzlibrary{dsp}
\usetikzlibrary{decorations.pathreplacing}

\newcommand{\ones}[1]{\mathbf{1}_{#1}}
\newcommand{\zeros}[1]{\mathbf{0}_{#1}}

\renewcommand{\prox}[3]{\mathrm{prox}_{#1 #3}\left(#2\right)}
\newcommand{\Q}[0]{\mathcal{F}}
\newcommand{\barQ}[0]{\bar{F}}
\newcommand{\ww}[0]{\boldsymbol{\omega}}
\newcommand{\Rj}[1]{\mathcal{R}_{#1}}

\declaretheorem[name=Theorem, refname={Theorem,Theorems}]{theorem}
\declaretheorem[name=Lemma, refname={Lemma,Lemmas}]{lemma}
\declaretheorem[name=Proposition, refname={Proposition,Propositions}]{proposition}
\declaretheorem[name=Corollary, refname={Corollary,Corollaries}]{corollary}


\begin{document}

\title{Learning Filter Bank Sparsifying Transforms}

\author{Luke~Pfister,~\IEEEmembership{Student Member,~IEEE,}
        Yoram~Bresler,~\IEEEmembership{Fellow,~IEEE}%
\thanks{
This work was supported in part by the National Science Foundation (NSF)
under Grants CCF 1018660 and CCF-1320953.
}}

\maketitle
\begin{abstract}
  Data is said to follow the transform (or analysis) sparsity model if it becomes sparse when acted
  on by a linear operator called a sparsifying transform. Several algorithms have been designed to
  learn such a transform directly from data, and data-adaptive sparsifying transforms have
  demonstrated excellent performance in signal restoration tasks. Sparsifying transforms are
  typically learned using small sub-regions of data called patches, but these algorithms often
  ignore redundant information shared between neighboring patches.

We show that many existing transform and analysis sparse
representations can be viewed as filter banks, thus linking the local properties
of patch-based model to the global properties of a convolutional model. We
propose a new transform learning framework where the sparsifying transform is
an undecimated perfect reconstruction filter bank. Unlike previous
transform learning algorithms, the filter length can be chosen independently of
the number of filter bank channels. Numerical results indicate filter bank
sparsifying transforms outperform existing patch-based transform learning for
image denoising while benefiting from additional flexibility in the design
process.
\end{abstract}

\begin{IEEEkeywords}
  sparsifying transform, analysis model, analysis operator learning, sparse representations,
  perfect reconstruction, filter bank.
\end{IEEEkeywords}

\section{Introduction}
\label{sec:Introduction}
Countless problems, from statistical inference to geological exploration, can be stated as the
recovery of high-quality data from incomplete and/or corrupted linear measurements. Often, recovery
is possible only if a model of the desired signal is used to regularize the recovery problem.

A powerful example of such a signal model is the \emph{sparse representation}, wherein the signal of
interest admits a representation with few nonzero coefficients. Sparse representations have been
traditionally hand-designed for optimal properties on a mathematical signal class, such as the
coefficient decay properties of a cartoon-like signal under a curvelet representation
\cite{Candes2004}. Unfortunately, these signal classes do not include the complicated and textured
signals common in applications; further, it is difficult to design optimal representations for high
dimensional data. In light of these challenges, methods to \emph{learn} a sparse representation,
either from representative training data or directly from corrupted data, have become attractive.

We focus on a particular type of sparse representation, called 
\emph{transform sparsity}, in which the signal $x \in
\Rbb^N$ satisfies $W x = z + \eta$. The matrix $W \in \Rbb^{K \times N}$ is called a
\emph{sparsifying transform} and earns its name as $z \in \Rbb^K$ is sparse and $\norm{\eta}_2$ is
small \cite{Ravishankar2013b}. Of course, a $W$ that is uniformly zero satisfies this definition but
provides no insight into the transformed signal. Several algorithms have been proposed to
learn a sparsifying transform from data, and each must contend with this type of degenerate solution. The
most common approach is to ensure that $W$ is left invertible, so that $Wx$ is uniformly zero if and
only if $x$ is uniformly zero. Such a matrix is a \emph{frame} for $\Rbb^n$.

In principle, we can learn a sparse representation for any data represented as a vector, including
data from genomic experiments or text documents, yet most research has focused on learning models
for spatio-temporal data such as images. With these signals it is common to learn a model for
smaller, possibly overlapping, blocks of the data called \emph{patches}. We refer to this type of
model as a \emph{patch-based} model, while we call a model learned directly at the image level an
\emph{image-based} model. Patch-based models tend to have fewer parameters than an unstructured
image-based model, leading to lower computational cost and reduced risk of overfitting. In
addition, an image contains many overlapping patches, and thus a model can be learned from a single
noisy image \cite{Elad2006}.

Patch-based models are not without drawbacks. Any patch-based $W$ learned using the usual frame
constraints must have at least as many rows as there are elements in a patch, \ie $W$ must be square
or tall. This limits practical patch sizes as $W$ must be small to benefit from a
patch-based model. 

If our ultimate goal is image reconstruction, we must be mindful of the connection between extracted
patches and the original image. Requiring $W$ to be a frame for patches ignores this relationship
and instead requires that each patch can be independently recovered. Yet, neighboring patches can
be highly correlated- leading us to wonder if the patch-based frame condition is too strict. This
leads to the question at the heart of this paper: \emph{Can we learn a sparsifying transform that
  forms a frame for images, but not for patches, while retaining the computational efficiency of a
  patch-based model? }

In this paper, we show that existing sparsifying transform learning algorithms can be viewed as
learning perfect reconstruction filter banks. This perspective leads to a new approach to learn a
sparsifying transform that forms a frame over the space of images, and is structured as an
undecimated, multidimensional filter bank. We call this structure a \emph{filter bank sparsifying
  transform}. We keep the efficiency of a patch-based model by parameterizing the filter bank in
terms of a small matrix $W$. In contrast to existing transform learning algorithms, our approach can
learn a transform corresponding to a tall, square, or fat $W$. Our learned model outperforms earlier
transform learning algorithms while maintaining low cost of learning the filter bank and the
processing of data by it. Although we restrict our attention to 2D images, our technique is
applicable to any data amenable to patch based methods, such as 3D imaging data.

The rest of the paper is organized as follows.
In Section \ref{sec:preliminaries} we review previous work on transform learning,
analysis learning, and the relationship between patch-based and image-based models.
In Section \ref{sec:pbst_fb} we develop the connection between perfect reconstruction filter banks
and patch-based transform learning algorithms. We propose our filter bank learning algorithm in
Section \ref{sec:learning} and present numerical results in Section \ref{sec:experiments}.  In
Section \ref{sec:remarks} we compare our learning framework to the current crop of deep
learning inspired approaches, and conclude in Section \ref{sec:conclusion}.

\section{Preliminaries}
\label{sec:preliminaries}

\subsection{Notation}
Matrices are written as capital letters, while general linear operators are denoted by script
capital letters such as $\mathcal{A}$. Column vectors are written as lower case letters. The $i$-th
component of a vector $x$ is $x_i$. The $i,j$-th element of a matrix $A$ is $A_{ij}$. We write the
$j$-th column of $A$ as $A_{:,j}$, and $A_{i,:}$ is the column vector corresponding to the transpose
of the $i$-th row. The transpose and Hermitian transpose are $A^T$ and $A^*$, respectively.
Similarly, $\mathcal{A}^*$ is the adjoint of the linear operator $\mathcal{A}$. The $n\times n$
identity matrix is $I_n$. The $n$-dimensional vectors of ones and zeros are written as $\ones{n}$
and $\zeros{n}$, respectively. The convolution of signals $x$ and $y$ is written
$x * y$. For vectors $x, y \in \Rbb^N$, the Euclidean inner
product is $\ip{x, y} = \sum_{i=1}^N x_i y_i$ and the Euclidean norm is written $\norm{x}_2$.
The vector space of matrices $\Rbb^{M \times N}$ is equipped with the inner product $\ip{X, Y} =
\trace{X^T Y}$.
When necessary, we explicitly indicate the vector space on which the inner product is defined; \eg $\ip{x, y}_{\Rbb^N}$.

\subsection{Frames, Patches, and Images}
\label{sub:frames}
A set of vectors $\left\{ \omega_i \right\}_{i=1}^M$ in $\Rbb^m$ is a
\emph{frame} for $\Rbb^m$ if there exists $ 0 < A \leq B < \infty$ such that
\begin{equation}
  \label{eq:frame_condition}
  A \norm{x}_2^2 \leq \sum_{j=1}^M \abs{\ip{x, \omega_i}}^2 \leq B \norm{x}_2^2
\end{equation}
for all $x \in \Rbb^N$ \cite{Christensen2003}. Equivalently, the matrix $\Omega \in \Rbb^{M \times m}$,
with $i$-th row given by $\omega_i$, is left invertible. The frame bounds $A$ and $B$ correspond to the
smallest and largest eigenvalues of $\Omega^T \Omega$, respectively. The frame is \emph{tight} if $A = B$, and
in this case $\Omega^T \Omega = A I_n$. The condition number of the frame is the ratio of the frame bounds,
$B/A$. The $\omega_i$ are called \emph{frame vectors}, and the matrix $\Omega$ implements a
\emph{frame expansion}.

Consider a patch-based model using $K \times K$ (vectorized) patches from an $N \times N$ image.
We call $\Rbb^{K^2}$ the \emph{space of patches} and $\Rbb^{N\times N}$ the \emph{space of images}.
In this setting, transform learning algorithms  find a $W \in \Rbb^{N_c \times K^2}$ with rows that
form a frame for the space of patches\cite{Ravishankar2013b, Ravishankar2013c, Ravishankar2013d, Ravishankar2013a, Wen2014}.

We can extend this $W$ to a frame over the space of images as follows. Suppose the rows of $W$ form
a frame with frame bounds $0 < A \leq B$. Let $\Rj{j} : \Rbb^{N \times N} \to \Rbb^{K^2}$ be the linear
operator that extracts and vectorizes the $j$-th patch from the image, and suppose there are $M$
such patches. So long as each pixel in the image is contained in at least one patch, we have
\begin{equation}
  \label{eq:patch_frame_trivial}
\norm{x}_2^2 \leq \sum_{j=1}^M \norm{\Rj{j} x}_2^2 \leq M \norm{x}_2^2
\end{equation}
for all $x \in \Rbb^{N\times N}$.
Letting $w^i = W_{i, :}$ denote the $i$-th row of $W$, we have for all $x \in \Rbb^{N\times N}$
\begin{align}
n  \sum_{j=1}^M \sum_{i=1}^{N_c} \abs{ \ip{w^i, \Rj{j} x}}^2 &
 \geq \sum_{j=1}^M A \norm{\Rj{j} x}_2^2 \geq A \norm{x}_2^2, \\
\sum_{j=1}^M \sum_{i=1}^{N_c} \abs{ \ip{w^i, \Rj{j} x}}^2 & \leq B \sum_{j=1}^M  \norm{\Rj{j} x}_2^2 \leq M B \norm{x}_2^2.
\end{align}
Because $\ip{w^i, \Rj{j} x}_{\Rbb^{K^2}} = \ip{\Rj{j}^* w^i, x}_{\Rbb^{N \times N}}$, it follows that the collection
$\left\{\Rj{j}^* w^i \right\}_{i=1, j=1}^{N_c, M}$
forms a frame for the space of images with bounds $0 < A \leq MB$.
Thus, every frame over the space of patches corresponds to a frame over the space of images. We
will see that the converse is not true, leading to some of the questions addressed by this paper.

\subsection{Transform Sparsity}
\label{sub:transform}
Recall that a signal $x \in \Rbb^N$ satisfies the transform
sparsity model if there is a matrix $W \in \Rbb^{K \times N}$ such that $Wx = z+\eta$, where $z$
is sparse and $\norm{\eta}_2$ is small.  The matrix $W$ is called a \emph{sparsifying transform} and
the vector $z$ is a \emph{transform sparse code}.
Given a signal $x$ and sparsifying transform $W$, the \emph{transform sparse coding} problem is
\begin{equation}
\label{eq:tsc_p}
\min_z \frac{1}{2}\norm{Wx - z}_2^2 + \nu \psi(z)
\end{equation}
for a sparsity promoting functional $\psi$. Exact $s$-sparsity can be enforced by selecting
$\psi$ to be a barrier function over the set of $s$-sparse vectors. We recognize
\eref{eq:tsc_p} as the evaluation of the proximal operator of $\psi$, defined as
\begin{equation}
\label{eq:prox}
\prox{\psi}{t}{\nu} = \argmin_z \frac{1}{2} \norm{t - z}_2^2 + \nu \psi(z),
\end{equation}
at the point $t=Wx$. Transform sparse coding is often cheap as the proximal mapping of many sparsity
penalties can be computed cheaply and in closed form. For instance, when $\psi(z) = \norm{z}_0$,
then $z = \prox{\psi}{Wx}{\nu}$ is computed by setting $z_i = [Wx]_i$ whenever $\abs{[Wx]_i}^2 >
\nu^2$, and setting $z_i = 0$ otherwise. This operation is called \emph{hard thresholding}.

Several methods have been proposed to learn a sparsifying transform from data, including 
algorithms to learn square transforms \cite{Ravishankar2013b}, orthonormal transforms
\cite{Ravishankar2013c}, structured transforms \cite{Ravishankar2013d}, and overcomplete transforms
consisting of a stack of square transforms \cite{Ravishankar2013a, Wen2014}.
Degenerate solutions are prevented by requiring the rows of the learned transform to constitute a
well-conditioned frame.
In the square case,  the transform learning problem can be written
\begin{equation}
  \label{eq:pbst_obj}
  \min_{W, Z} \frac{1}{2}\norm{W Y - Z}_F^2 + \psi(Z) +  \frac{1}{2} \norm{W}_F^2 - \mu \log{\abs{\det{W}}}
\end{equation}
where $Y$ is a matrix whose columns contain training signals and $\psi$ is a sparsity-promoting
functional. The first term ensures that the transformed data, $WY$, is close to the matrix $Z$,
while the second term ensures that $Z$ is sparse. The remaining terms ensure that $W$ is full rank
and well-conditioned\cite{Ravishankar2013b}. Square sparsifying
transforms have demonstrated excellent performance in image denoising, magnetic resonance imaging,
and computed tomographic reconstruction \cite{Pfister2014, Ravishankar2013, Pfister2014a,
  Pfister2014b}.

\subsection{Analysis sparsity}
\label{sub:analysis}
Closely related to transform sparsity is the \emph{analysis model}. A signal $x \in \Rbb^N$ satisfies the
analysis model if there is a matrix $\Omega \in \Rbb^{K \times N}$, called an analysis operator,  such that $\Omega x = z$ is
sparse.  The analysis model follows by restricting $\eta = \zeros{K}$ in the
transform sparsity model.  

A typical analysis operator learning algorithm is of the form
\begin{equation}
  \label{eq:analysis}
  \min_{\Omega} \psi(\Omega Y)  + J(\Omega)
\end{equation}
where $Y$ are training signals, $\psi$ is a sparsity promoting functional, and $J$ is a regularizer
to ensure the learned $\Omega$ is informative. In the Analysis K-SVD algorithm, the rows of $\Omega$
are constrained to have unit norm, but frame constraints are the most common \cite{Rubinstein2013}.
Yaghoobi \etal observed that learning an analysis operator with $q > K$ rows while using a tight
frame constraint resulted in operators consisting of a full rank matrix appended with $q - K$
uniformly zero rows. They instead proposed a uniformly-normalized tight frame
(UNTF) constraint, wherein the rows of $\Omega$ have equal $\ell_2$ norm and constitute a tight
frame \cite{Yaghoobi2011, Yaghoobi2012, Yaghoobi2013}.

Hawe \etal utilized a similar set of constraints in their GeOmetric Analysis operator Learning
(GOAL) framework \cite{Hawe2013}. They constrained the learned $\Omega$ to the set of full column
rank matrices with unit-norm rows and solved the optimization problem using a manifold descent
algorithm.

Transform and analysis sparsity are closely linked. Indeed, using a variable splitting approach (\eg
$Z = \Omega Y$) to solve \cref{eq:analysis} leads to algorithms that are nearly identical to
transform learning algorithms. The relationships between the transform model, analysis model, and
noisy variations of the analysis model have been explored \cite{Ravishankar2013b}. We focus on the
transform model as the proximal interpretation of sparse coding lends fits nicely within a filter bank
interpretation (see \cref{sec:role-sparsification}).

\subsection{From patch-based to image-based models}
\label{sec:patch_image_models}
A link between patch-based and image-based models can be made using the \emph{Field of Experts}
(FoE) model proposed by Roth and Black \cite{Roth2009}. They modeled the prior probability of an
image as a Markov Random Field (MRF) with overlapping ``cliques'' of pixels that serve as image
patches. Using the so-called Product of Experts framework, a model for the prior probability of an
image patch is expressed as a sparsity-inducing potential function applied to the inner products between
multiple 'filters' and the image patch. A prior for the entire image is formed by taking the
product of the prior for each patch and normalizing.

Continuing in this direction, Chen \emph{et. al} proposed a method to learn an image-based analysis
operator using the FoE framework using a bi-level optimization formulation\cite{Chen2014}.
This approach was recently extended into an iterated filter bank structure called a \emph{Trainable
  Nonlinear Reaction Diffusion} (TNRD) network \cite{Chen2016}.   Each stage of the TNRD network
consists of a set of analysis filters, a channelwise nonlinearity, the adjoint filters, and a direct
feed-forward path. The filters, nonlinearity, and feed-forward mixing weights are trained in a
supervised fashion. The TNRD approach has demonstrated state of the art performance on image
denoising tasks.

The TNRD and FoE algorithms are supervised and use the filter bank structure only as a
computational tool. In contrast, our approach is unsupervised and uses the theory of perfect
reconstruction filter banks to regularize the learning problem.

Cai \etal developed an analysis operator learning method based on a filter bank interpretation of
the operator \cite{Cai2014}. The operator can be thought to act on images, rather than
patches. Their approach is fundamentally identical to learning a square, orthonormal, patch-based
sparsifying transform \cite{Ravishankar2013c}.  In contrast, our approach does
not have these restrictions: we learn a filter bank that is a frame over the space of images,
and corresponds to a tall, fat, or square patch-based transform.

Image-based modeling using the synthesis sparsity model has also been studied \cite{Zeiler2010,
  Papyan2016, Papyan2016a, Wohlberg2016, Garcia-Cardona2017}.
We briefly discuss the
convolutional dictionary learning (CDL) problem; for details, see the recent reviews
\cite{Wohlberg2016, Garcia-Cardona2017}.

The goal of CDL is to find a set of filters, $\left\{ d_i \right\}_{i=1}^{N_c}$, such that the
training signals can be modeled as $y = \sum_{i=1}^{N_c} d_i * a_i$, where the $a_i$ are sparse.
Here, $y$ is an image, not a patch. The filters $d_i$ are required to have compact support so as to
limit the number of free parameters in the learning problem. The desired convolutional structure can
be imposed by writing the convolution in the frequency domain, but care must be taken to ensure that
the $d_i$ remain compactly supported.  

In the next section, we show that patch-based analysis and transform models, in contrast to
synthesis models, are naturally endowed with a convolutional structure.

\section{Patch-based Sparsifying Transforms as Filter Banks}
\label{sec:pbst_fb}
In this section, we illustrate the connections between patch-based sparsifying transforms and
multirate finite impulse response (FIR) filter banks. The link between patch-based analysis methods
and convolution has been previously established, but thus far has been used only as a computational tool and the
effect of stride has not been considered \cite{Peyre2011, Cai2014, Roth2009, Chen2014, Chen2012}.
Our goal is to illustrate how and when the boundary conditions, patch stride, and a patch-based
sparsifying transform combine to form a frame over the space of images.

Let $x \in \Rbb^{N \times N}$ be an image, and let $W \in \Rbb^{N_c \times K^2}$ be a sparsifying
transform intended to act on (vectorized) $K \times K$ patches of $x$. We consider two ways to
represent applying the transform to the image.

The usual approach is to form the patch matrix $X \in \Rbb^{K^2 \times M^2}$ by vectorizing $K
\times K$ patches extracted from the image $x$, as illustrated in \cref{fig:patch_mat}. We call the
spacing between adjacent extracted patches the \emph{stride} and denote it by $s$. The extracted
patches overlap when $s < K$ and are disjoint otherwise. We assume the stride is the same in both
horizontal and vertical directions and evenly divides $N$. The number of patches, $M^2$, depends on
the boundary conditions and patch stride; \eg $M^2 = N^2/s^2$ if periodic boundary conditions are
used.  The patch matrix for the transformed image is $W X \in \Rbb^{N_c \times M^2}$. 

Our second approach eliminates patches and their vectorization by viewing $WX$
as the output of a multirate filter bank with 2D FIR filters and input $x$.  

Let $\mathcal{H} : \Rbb^{N \times N} \to \Rbb^{N_c} \otimes \Rbb^{M \times M}$ be this filter bank 
operator, which transforms an $N \times N$ image to a third-order tensor formed as a stack of $N_c$
output images, each of size $M \times M$.

We build $\mathcal{H}$ from a collection of downsampled convolution operators. For $i=1, 2, \hdots
N_c$, we define the $i$-th channel operator $ \mathcal{H}_i : \Rbb^{N \times N} \to \Rbb^{M \times
  M}$ such that $[\mathcal{H}_i x]_{a,b} = [h_i * x]_{s a, s b}$. The stride $s$ dictates the
downsampling level, and the patch extraction boundary conditions determine the convolution boundary
conditions; in particular, if periodic boundary conditions are used, then $\mathcal{H}_i$ implements
cyclic convolution. The impulse response $h_i$ is obtained from the $i$-th row of $W$ as $\Rj{1}^*
w^i$. This matrix consists of a $K \times K$ submatrix embedded into the upper-left corner of an $N
\times N$ matrix of zeros as illustrated in Figure \ref{fig:filter_from_patch}. \footnote{In the case of
cyclic convolution, $\Rj{1}^* w^i$ is exactly the impulse response of the $i$-th channel, but only
the nonzero portion of $\Rj{1}^* w^i$ is the impulse response when using linear convolution. In a
slight abuse of terminology, we call $\Rj{1}^* w^i$ the impulse response in both instances.}

Finally, we construct $\mathcal{H}$ by ``stacking'' the channel operators: $\mathcal{H} =
\sum_{i=1}^{N_c} e_i \otimes \mathcal{H}_i$, where $e_i$ is the $i$-th standard basis vector in
$\Rbb^{N_c}$ and $\otimes$ denotes the Kronecker (or tensor) product. With this definition, $ y=
\mathcal{H}x = \sum e_i \otimes \mathcal{H}_i x = \sum e_i \otimes y_i$. The filter bank structure
is illustrated in \cref{fig:analysis_fb}. We refer to $\mathcal{H}$ constructed in this form as a
\emph{filter bank sparsifying transform}. The following proposition links $WX$ and $\mathcal{H} x$:

\begin{proposition}
  \label{prop:filter_bank}
  Let $X \in \Rbb^{K^2 \times M^2}$ be a patch matrix for image $X$, and let $W \in \Rbb^{N_c \times
    K^2}$. The rows of $WX$ can obtained by passing $x$ through the $N_c$ channel, 2D FIR multirate
  analysis filter bank $\mathcal{H}$ and vectorizing the channel outputs.
\end{proposition}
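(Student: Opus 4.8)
The plan is to verify the claim entrywise: I will show that the $(i,j)$ entry of $WX$ equals a prescribed sample of the $i$-th channel output $\mathcal{H}_i x$, and then check that the vectorization conventions agree. First I would compute the entries of $WX$ directly. By definition of the patch matrix, the $j$-th column of $X$ is the vectorized $j$-th patch, $X_{:,j} = \Rj{j} x$, so that $[WX]_{ij} = \ip{w^i, \Rj{j} x}$ with $w^i = W_{i,:}$. Using the adjoint identity already invoked in \cref{sub:frames}, namely $\ip{w^i, \Rj{j} x}_{\Rbb^{K^2}} = \ip{\Rj{j}^* w^i, x}_{\Rbb^{N \times N}}$, this becomes an inner product in the space of images, and the remaining task is to recognize the right-hand side as a downsampled convolution.

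Next I would expose the shift structure of the patch-extraction operators. Because patches are taken on a grid of stride $s$, the $j$-th patch is a translate of the first: fixing the bijection $j \leftrightarrow (a_j, b_j)$ between the linear patch index and its grid coordinates, the patch indexed by $j$ begins at pixel $(s a_j, s b_j)$. Consequently $\Rj{j}^* w^i$ is exactly the embedded kernel $h_i = \Rj{1}^* w^i$ translated by $(s a_j, s b_j)$, where the translation is interpreted cyclically or with zero fill according to the patch-extraction boundary condition. Writing out $\ip{\Rj{j}^* w^i, x}$ then gives the sum of $h_i$ against the window of $x$ anchored at $(s a_j, s b_j)$, which is precisely one sample of the sliding inner product of $h_i$ with $x$. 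Identifying this sliding inner product with $[h_i * x]$ and evaluating only at the grid points $(s a_j, s b_j)$ yields $\ip{\Rj{j}^* w^i, x} = [h_i * x]_{s a_j, s b_j} = [\mathcal{H}_i x]_{a_j, b_j}$, which is the definition of the channel operator $\mathcal{H}_i$.

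Finally I would reconcile the two vectorizations. Enumerating the patches (the columns of $X$) and the entries of the $M \times M$ output $\mathcal{H}_i x$ in the same raster order makes $j \leftrightarrow (a_j, b_j)$ the identity relabeling, so the $i$-th row of $WX$ is exactly the vectorization of $\mathcal{H}_i x$. Stacking over $i = 1, \dots, N_c$ then shows that the rows of $WX$ are the vectorized channel outputs of $\mathcal{H} = \sum_i e_i \otimes \mathcal{H}_i$. The boundary clause is consistent with the footnote: periodic patch extraction produces cyclic convolution, so $h_i = \Rj{1}^* w^i$ is literally the impulse response, whereas linear extraction produces linear convolution with the nonzero part of $h_i$.

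The step I expect to be the main obstacle is the bookkeeping in the middle paragraph, namely matching the sliding inner product against $h_i$ to the convolution $h_i * x$. Convolution reverses the kernel, so strictly the equality $\ip{\Rj{j}^* w^i, x} = [h_i * x]_{s a_j, s b_j}$ holds with a spatial reversal of $w^i$ (equivalently, what is written as convolution is the correlation of $h_i$ with $x$); this reversal, together with the off-by-one index shifts and the cyclic-versus-linear boundary handling, is the only part that requires care. Everything else is a direct unwinding of the definitions.
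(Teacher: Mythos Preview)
Your proposal is correct and essentially identical to the paper's proof: the paper also computes $[WX]_{i,j}$ entrywise (in 1D, with the 2D case deferred as ``tedious but straightforward'') and recognizes the sum as a downsampled convolution. The flip/shift bookkeeping you anticipate is handled in the paper exactly as you suggest, by building the reversal into the patch-vectorization convention (so the $j$-th column is $[x_{sj+K-1},\dots,x_{sj}]^T$) and absorbing the residual $K-1$ shift into the definition of which patch is ``first.''
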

A proof for 1D signals is given in Appendix \ref{app:conv_proof}.  The proof for 2D is similar,
using vector indices.

Proposition \ref{prop:filter_bank} connects the local, patch-extraction process and the matrix $W$
to a filter bank operator that acts on images. Unlike convolutional synthesis models, patch-based
analysis operators naturally have a convolutional structure.

\begin{figure}[t]
  \centering
  \begin{subfigure}{0.4\columnwidth}
  \scalebox{0.65}{\newcounter{row}
\newcounter{col}

\newcommand\setrow[4]{
  \setcounter{col}{1}
  \foreach \n in {#1, #2, #3, #4} {
    \edef\x{\value{col}*0.8 - 0.4}
    \edef\y{2.8 - \value{row}*0.8}
    \node[anchor=center] at (\x, \y) {\Large \n};
    \stepcounter{col}
  }
  \stepcounter{row}
}
\newcommand\setcolvec[3]{
  \setcounter{col}{1}
  \foreach \n in {#1, #2, #3} {
    \edef\x{\value{col}*0.8 - 0.4}
    \edef\y{3.2 - \value{row}*0.8}
    \node[anchor=center] at (\x, \y) {\Large \n};
    \stepcounter{col}
  }
}

\begin{tikzpicture}
  \edef\Nc{4}
  \edef\Nr{3}
  \edef\gs{0.8}
  \edef\gx{\Nc*\gs}
  \edef\gy{\Nr*\gs}
  \draw[step=\gs] (0,0) grid (\gx,\gy);
  \setcounter{row}{1}
  \setrow {1}{2}{3}{4}
  \setrow {5}{6}{7}{8}
  \setrow {9}{10}{11}{12}

    \node[anchor=center, font=\huge] (arrow) at (\gx+0.5, 0.5*\gy){$\to$};

    \fill[red, opacity=.2, step=\gs] (0.0, \Nr*\gs - 2*\gs) rectangle (2*\gs, \Nr*\gs);

    \edef\x{1 cm + \gx cm}
    \edef\y{-0.4cm}
    \draw[step=\gs,xshift=\x,yshift=\y](0,0) grid (4*\gs, 4*\gs);  

    \fill[red,  opacity=.2, step=0.8cm,xshift=\x,yshift=\y] (0.0, 0.0) rectangle (\gs, 4*\gs);

    \edef\gs2{0.5*\gs cm}
    \node[anchor=center,font=\Large] (X_bottom_left) at (\x+\gs2,\y+\gs2){$1$};
    \node[anchor=center,font=\Large] at (\x+\gs2,\y+3*\gs2){$2$};
    \node[anchor=center,font=\Large] at (\x+\gs2,\y+5*\gs2){$5$};
    \node[anchor=center,font=\Large] at (\x+\gs2,\y+7*\gs2){$6$};
    \node[draw=none,fill=none,font=\large] at (\x+0.39cm, \y+3.5cm){$\Rj{1} x$};

    \node[anchor=center,font=\Large] at (\x+3*\gs2,\y+1*\gs2){$3$};
    \node[anchor=center,font=\Large] at (\x+3*\gs2,\y+3*\gs2){$4$};
    \node[anchor=center,font=\Large] at (\x+3*\gs2,\y+5*\gs2){$7$};
    \node[anchor=center,font=\Large] at (\x+3*\gs2,\y+7*\gs2){$8$};
    \node[draw=none,fill=none,font=\large] at (\x+1.21cm, \y+3.5cm){$\Rj{2} x$};

    \node[anchor=center,font=\Large] at (\x+5*\gs2,\y+1*\gs2){$9$};
    \node[anchor=center,font=\Large] at (\x+5*\gs2,\y+3*\gs2){$10$};
    \node[anchor=center,font=\Large] at (\x+5*\gs2,\y+5*\gs2){$1$};
    \node[anchor=center,font=\Large] at (\x+5*\gs2,\y+7*\gs2){$2$};
    \node[draw=none,fill=none,font=\large] at (\x+2.0cm, \y+3.5cm){$\Rj{3} x$};

    \node[anchor=center,font=\Large] (X_bottom_right) at (\x+7*\gs2,\y+1*\gs2){$11$};
    \node[anchor=center,font=\Large] at (\x+7*\gs2,\y+3*\gs2){$12$};
    \node[anchor=center,font=\Large] at (\x+7*\gs2,\y+5*\gs2){$3$};
    \node[anchor=center,font=\Large] at (\x+7*\gs2,\y+7*\gs2){$4$};
    \node[draw=none,fill=none,font=\large] at (\x+2.8cm, \y+3.5cm){$\Rj{4} x$};

    \draw [decorate,decoration={brace,mirror,amplitude=7}] ([yshift=-5mm] X_bottom_left.west) --node[below=2mm,font=\huge]{$X$} ([yshift=-5mm] X_bottom_right.east);



    \node[draw=none,fill=none,font=\huge] at   (1.6cm, \y-0.6cm){$x$};

  \end{tikzpicture}

  \caption{}
  \label{fig:patch_mat}
\end{subfigure}
\hspace{4.0em}
  \begin{subfigure}{0.4\columnwidth}
  \scalebox{0.65}{\hspace{30pt}\begin{tikzpicture}
    \edef\x{0cm}
    \edef\y{-0.5cm}
    \draw[step=0.8cm,xshift=\x,yshift=\y](0,0) grid (0.8,3.2);  
    \node[anchor=center] (X_bottom_left) at (\x+0.4cm,\y+0.4cm){\Large$1$};
    \node[anchor=center] at (\x+0.4cm,\y+1.2cm){\Large$2$};
    \node[anchor=center] at (\x+0.4cm,\y+2.0cm){\Large$3$};
    \node[anchor=center] at (\x+0.4cm,\y+2.8cm){\Large$4$};
    \node[draw=none,fill=none,font=\huge] at (\x+0.39cm, \y-0.6cm){$w^i$};
    \node[anchor=center, font=\huge] (arrow2) at (1.4, 1.1cm){$\to$};
    \node[anchor=center, font=\Large] at ([yshift=3mm] arrow2.north) {$\Rj{1}^*$};

    \edef\x{2.0cm}
    \edef\y{0.4cm}
    \draw[step=0.8cm,xshift=\x] (0,0) grid (2.4,2.4);
    \node[anchor=center] at (\x+0.4cm, \y+1.6cm){\Large $1$};
    \node[anchor=center] at (\x+0.4cm, \y+0.8cm){\Large $3$};
    \node[anchor=center] at (\x+0.4cm, \y+0.0cm){\Large $0$};
    \node[anchor=center] at (\x+1.2cm, \y+1.6cm){\Large $2$};
    \node[anchor=center] at (\x+1.2cm, \y+0.8cm){\Large $4$};
    \node[anchor=center] at (\x+1.2cm, \y+0.0cm){\Large $0$};
    \node[anchor=center] at (\x+2.0cm, \y+1.6cm){\Large $0$};
    \node[anchor=center] at (\x+2.0cm, \y+0.8cm){\Large $0$};
    \node[anchor=center] at (\x+2.0cm, \y+0.0cm){\Large $0$};
    \node[draw=none,fill=none,font=\huge] at (\x+1.2cm, \y-1.5cm){$h_i$};
    
  \end{tikzpicture}

  \caption{}
  \label{fig:filter_from_patch}
\end{subfigure}
\caption{(a) Construction of the patch matrix $X\in\Rbb^{4 \times 4}$ from $2 \times 2$ patches of $x
    \in \Rbb^{3 \times 4}$ using periodic boundary conditions and a stride of $2$.
    Note that the vectorized patch is ``flipped'' from the natural ordering; \ie, the top-left pixel
    in the patch is the final element of the vector.
    (b) Obtaining the impulse response $h_i$ from
    the $i$-th row of the patch based sparsifying transform $w^i  = W_{i,:}$.
}
\end{figure}

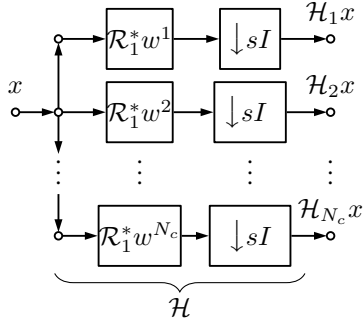
\begin{figure}[t]
\begin{center}

\begin{tikzpicture}[decoration={brace,mirror, amplitude=7}]
\makeatletter
\DeclareRobustCommand{\rvdots}{%
  \vbox{
    \baselineskip4\p@\lineskiplimit\z@
    \kern-\p@
    \hbox{.}\hbox{.}\hbox{.}
  }}

\makeatother
\matrix (m1) [row sep=1.5mm, column sep=1.8mm]
    {
      \node[coordinate]         (m00) {}; &
      \node[coordinate]         (m01) {}; &
      \node[dspnodeopen]        (m02) {}; &
      \node[coordinate]         (m03) {}; &
      \node[dspsquare]          (m04) {$\Rj{1}^* w^{1}$}; &
      \node[coordinate]         (m05) {}; &
      \node[dspsquare]          (m06) {$\downsamplertext{s I}$}; &
      \node[coordinate]         (m07) {}; &
      \node[dspnodeopen]        (m08) {$\mathcal{H}_{1} x$};  &
\\

      \node[dspnodeopen]        (m10) {$x$};  &
      \node[coordinate]         (m11) {}; &
      \node[dspnodeopen]        (m12) {}; &
      \node[coordinate]         (m13) {}; &
      \node[dspsquare]          (m14) {$\Rj{1}^* w^{2}$}; &
      \node[coordinate]         (m15) {}; &
      \node[dspsquare]          (m16) {\ $\downsamplertext{s I}$ \ }; &
      \node[coordinate]         (m17) {}; &
      \node[dspnodeopen]        (m18) {$\mathcal{H}_2 x$};  &
\\
      \node[coordinate]         (m20) {}; &
      \node[coordinate]         (m21) {}; &
      \node                     (m22) {\rvdots}; &
      \node[coordinate]         (m23) {}; &
      \node                     (m24) {\rvdots}; &
      \node[coordinate]         (m25) {}; &
      \node                     (m26) {\rvdots}; &
      \node[coordinate]         (m27) {}; &
      \node                     (m28) {\rvdots};  &
\\

      \node[coordinate]         (m30) {}; &
      \node[coordinate]         (m31) {}; &
      \node[dspnodeopen]        (m32) {}; &
      \node[coordinate]         (m33) {}; &
      \node[dspsquare]          (m34) {$\Rj{1}^* w^{N_c}$}; &
      \node[coordinate]         (m35) {}; &
      \node[dspsquare]          (m36) {\ \ $\downsamplertext{s I}$  \ }; &
      \node[coordinate]         (m37) {}; &
      \node[dspnodeopen]        (m38) {$\mathcal{H}_{N_c} x$};  &
      \\
    };

\foreach \i [evaluate = \i as \j using int(\i+2)] in {2,4,...,6}
		\draw[dspconn] (m0\i) -- (m0\j);
\foreach \i [evaluate = \i as \j using int(\i+2)] in {2,4,...,6}
		\draw[dspconn] (m1\i) -- (m1\j);
\foreach \i [evaluate = \i as \j using int(\i+2)] in {2,4,...,6}
		\draw[dspconn] (m3\i) -- (m3\j);

	\draw[dspconn] (m10) -- (m12);
	\draw[dspconn] (m12) -- (m02);
	\draw[dspconn] (m12) -- (m22);
	\draw[dspconn] (m22) -- (m32);

    \draw [decorate] ([yshift=-5mm]m32.west) --node[below=2mm]{$\mathcal{H}$} ([yshift=-5mm] m37.south);

\end{tikzpicture}

\end{center}
\caption{Analysis filter bank generated by a sparsifying transform $W$ and stride length $s$.
}
\label{fig:analysis_fb}
\end{figure}


Next, we investigate connections between the frame properties of $\mathcal{H}$ and the combination
of $W$ and the patch extraction scheme. Our primary tool is the polyphase representation of filter
banks \cite{Vaidyanathan1992, Do2011a}. Consider the image $x$ as a 2D sequence $x[n_1, n_2]$ for $0
\leq n_1, n_2 \leq N-1$. The $z$-transform of the $(a, b)$-th polyphase component of $x$ is
$z$-transform $\hat{X}_{a,b}(\mathbf{z}) = \sum_{n_1, n_2} x[n_1\cdot s + a, n_2 \cdot s + b]
z_1^{-n_1} z_2^{-n_2}$ of the shifted and downsampled sequence, where $\mathbf{z} = [z_1, z_2]\in
\Cbb^2$ and $0 \leq a, b \leq s-1$. The polyphase representation for the sequence $x$ is formed by
stacking the polyphase components in lexicographical order into a single $\hat{X}(\mathbf{z}) =
\left[X_{0,0}(\mathbf{z}), \hdots, X_{s-1, s-1}(\mathbf{z})\right]^T \in \Cbb^{s^2}$.

The filter bank $\mathcal{H}$ has a polyphase matrix $\hat{H}(\mathbf{z}) \in \Cbb^{Nc
  \times s^2}$ formed by stacking the polyphase representations of each channel into a row, and
stacking the $N_c$ rows. Explicitly,
\begin{equation}
  \label{eq:polyphase_mat}
  \hat{H}(\mathbf{z}) = \begin{bmatrix}
    \hat{H}^{0}_{0,0}(\mathbf{z}) &  \hat{H}^{0}_{0, 1}(\mathbf{z}) & \hdots & \hat{H}^{0}_{s, s}(\mathbf{z}) \\
    \hat{H}^{1}_{0,0}(\mathbf{z}) &  \hat{H}^{1}_{0, 1}(\mathbf{z}) & \hdots & \hat{H}^{1}_{s, s}(\mathbf{z}) \\
    \vdots & \vdots & \ddots & \vdots \\
    \hat{H}^{N_c-1}_{0,0}(\mathbf{z}) &  \hat{H}^{N_c-1}_{0,1}(\mathbf{z}) & \hdots & \hat{H}^{N_c-1}_{s, s}(\mathbf{z}) \\
  \end{bmatrix}
\end{equation}
where $\hat{H}^{i}_{a,b} (\mathbf{z})$ is the $(a,b)$-th polyphase component of the $i$-th filter in
$\mathcal{H}$. The entries of $\hat{H}(\mathbf{z})$ are, in general, bi-variate polynomials in
$\mathbf{z} = [z_1, z_2]$. The output of the filter bank, $y= \mathcal{H} x$, can be written in the polyphase
domain as $\hat{Y}(\mathbf{z}) = \hat{H}(\mathbf{z}) \hat{X}(\mathbf{z})$, where the $i$-th element
of the vector $\hat{Y}(\mathbf{z})$  is the $z$-transform of the $i$-th output channel.

Many important properties of $\mathcal{H}$ are tied to its polyphase matrix. An analysis filter
bank $\mathcal{H}$ is said to be \emph{perfect reconstruction} (PR) if there is a (synthesis) filter
bank $\mathcal{G}$ such that $\mathcal{G} \mathcal{H} = \mathcal{I}$, or in the polyphase domain
$\hat{G}(\mathbf{z}) \hat{H}(\mathbf{z}) = I$. A filter bank is PR if and only if
$\hat{H}(\mathbf{z})$ is full column rank on the unit circle \cite{Cvetkovic1998}. A filter bank is
said to be \emph{orthonormal} if $\mathcal{H}^* \mathcal{H} = \mathcal{I}$, that is, the filter bank
with analysis section $\mathcal{H}$ and synthesis section equal to the adjoint of $\mathcal{H}$ is
an identity mapping on $\Rbb^{N \times N}$. In the polyphase domain, this corresponds to
$\hat{H}^*(\mathbf{z}^{-1})\hat{H}(\mathbf{z}) = I$, where the star superscript denotes Hermitian
transpose and $\mathbf{z}^{-1} = [z^{-1}_1, z^{-1}_2]$ \cite{Strang1996, Vaidyanathan1992}.

A PR filter bank implements a frame expansion over the space of images, and an orthonormal filter
bank implements a tight frame expansion over the same space \cite{Bolcskei1998,Martin1995}. The
frame vectors are the collection of the shifts of the impulse responses of each channel, and are
precisely the collection $\left\{\Rj{j}^* w^i\right\}$ discussed in Section \ref{sub:frames}.
The link between patch-based transforms and filter banks does not directly lead to new transform
learning algorithms, as the characterization and construction of multidimensional PR filter
banks is hard due to the lack of a multidimensional spectral factorization theorem
\cite{Venkataraman1994, Do2011a, Zhou2005, Delgosha2004}.  

Next, we study we illustrate the connections between patch-based sparsifying transforms and perfect
reconstruction filter banks as a function of the stride length.  We show that in certain cases the
PR condition takes on a simple form.

\subsection{Perfect Recovery: Non-overlapping patches}
\label{sec:non_overlapping}
Consider $s = K$, so that the extracted patches do not overlap. Applying the sparsifying transform
$W$ to non-overlapping patches is an instance of a \emph{block transform} \cite{Malvar1992}. Block
transforms are found throughout in signal processing applications; for example, the JPEG compression
algorithm. Block transforms are viewed as a decimated FIR filter bank with uniform downsampling by
$K$ in each dimension, consistent with Proposition \ref{prop:filter_bank}.

It is informative to view patch-based transform learning algorithms through the lens of block
transformations. Because we downsample by $K$ in each dimension, and the filters are of size $K
\times K$, the polyphase matrix $\hat{H}(\mathbf{z})$ is constant in (independent of) $\mathbf{z}$
and is equal to $W$. This gives a direct connection between the PR properties of $\mathcal{H}$,
which acts on images, and $W$, which acts on patches. Patch-based transform learning algorithms
enforce either invertibility of $W$ (in the square case) or invertibility of $W^T W$ (in the
overcomplete case), and thus $\mathcal{H}$ is PR. If $W$ is orthonormal, so too is $\mathcal{H}$.

\subsection{Perfect Recovery: Partially overlapping patches}
Next, consider patches extracted using a stride $1 < s < K$. While $WX$ is no longer a block
transformation, it is related to a \emph{lapped transformation} \cite{Malvar1992}. Lapped transforms
aim to reduce artifacts that arise from processing each block (patch) independently by allowing for
overlap between neighboring blocks. Many lapped transforms, such as the Lapped Orthogonal Transform,
the Extended Lapped Transform, and the Generalized Lapped Orthogonal Transform
\cite{Poularikas2009}, enjoy both the PR property and efficient implementation.

Lapped transforms were designed for signal coding applications. The number of channels in
the filter bank is decreased as the degree of overlap increases, so that the number of transform
coefficients using a lapped transform is the same as using a non-lapped transform. While redundancy
may be undesirable in certain coding applications, it aids the solution of inverse problems by allowing for
richer and more robust signal models \cite{Goyal1998}. We allow the stride length to decrease while
keeping the number of channels fixed, and interpret $WX$ as a ``generalized'' lapped transform.
When the stride is less than $K$, $W$ no longer corresponds to the polyphase matrix of the filter
bank $\mathcal{H}$; instead, the polyphase matrix $\hat{H}(\mathbf{z})$ will contain high-order, 2D
polynomials. While the filter bank may still be PR, the PR property is not directly implied by
invertibility of $W$.

We can learn a PR generalized lapped transform by enforcing the more restrictive PR conditions for
non-overlapping patches, that is, invertibilty of $W^T W$. When $s = 1$, this technique is
equivalent to cycle spinning, which was developed to add shift-invariance to decimated wavelet
transforms \cite{Coifman1995}. When $1 < s < K$, we can interpret $\mathcal{H} x$ as cycle spinning
without all possible shifts.

\subsection{Perfect Recovery: Maximally overlapping patches}
Finally, consider extracting maximally overlapping patches by setting $s = 1$. The resulting
filter bank $\mathcal{H}$ is undecimated and the Gram operator $\mathcal{H}^* \mathcal{H}$ is
shift invariant. As there is no downsampling, the polyphase representations of $x$ and $y$ are the
$z$-transforms of the sequences $x$ and $y$. 
The polyphase matrix of $\mathcal{H}$ is the column vector
$\hat{H}(\mathbf{z}) = [\hat{H}_1(\mathbf{z}), \hdots \hat{H}_{N_c}(\mathbf{z})]^T$ where
$\hat{H}_i(\mathbf{z})$ is the $z$-transform of $h_i = \Rj{1}^*w^i$.

An undecimated linear convolution filter bank is PR if and only if its filters have no common zeros
on the unit circle \ie, each frequency must pass through at least one channel of the filter bank
\cite{Cvetkovic1998}. When evaluated on the unit circle the $z$-transform becomes the Discrete Time
Fourier Transform (DTFT), defined for $h \in \Rbb^{K \times K}$ as
\begin{equation}
  \label{eq:dtft}
  \hat{H}(\ww) = \sum_{n_1=0}^{K-1}\sum_{n_2=0}^{K-1} h[n_1, n_2] e^{-j \omega_1 n_1}e^{-j \omega_2 n_2}
\end{equation}
where $\ww = [\omega_1, \omega_2]$ with $\omega_1, \omega_2 \in [0, 2 \pi)$. Now, the polyphase matrix is full rank on the unit circle if and only if 
\begin{equation}
  \label{eq:phi_pr}
  \varphi(\ww) \triangleq \sum_{i=1}^{N_c} \abs{\hat{H}_i(\ww)}^2 > 0 \quad \forall w_1, w_2 \in [0, 2\pi),
\end{equation}
where $\varphi(\ww)$ is the DTFT of the impulse response of $\mathcal{H}^*
\mathcal{H}$ and is an even, real, non-negative, 2D trigonometric polynomial with maximum component
order $K - 1$. Explicitly, 
\begin{equation}
\varphi(\ww) = \sum_{n_1={-K+1}}^{K-1} \sum_{n_2=-K+1}^{K-1} \tilde{h}[n_1, n_2] \cos(\omega_1 n_1) \cos(\omega_2 n_2),
\end{equation}
where the impulse response of $\tilde{h}$ is $\mathcal{H}^* \mathcal{H}$ is the 
sum of the channel-wise autocorrelations; that is, 
\begin{equation}
\tilde{h}[n_1, n_2] = \sum_{i=1}^{N_c}\sum_{l_1=-\infty}^\infty\sum_{l_2=-\infty}^{\infty} h_i[l_1, l_2] h_i[l_1 - n_1, l_2 - n_2].
\end{equation}

Direct verification of the PR condition \eqref{eq:phi_pr} is NP-Hard for $K \geq 2$, underlining 
the difficulty of multidimensional filter bank design \cite{Murty1987, Parrilo2003}. We sidestep the
difficulty of working with \eqref{eq:phi_pr} by developing the PR condition when image patches are
extracted using periodic boundary conditions. The resulting filter bank implements \emph{cyclic}
convolution. Afterwards, we show that under certain conditions, the PR property of a cyclic
convolution filter bank implies the PR property of a linear convolution filter bank constructed from
the same filters.

\subsubsection{Periodic Boundary Conditions / Cyclic Convolution}
If image patches are extracted using periodic boundary conditions, the channel operators
$\mathcal{H}_i : \Rbb^{N\times N} \to \Rbb^{N \times N}$ implement cyclic convolution and are 
diagonalized by the 2D Discrete Fourier Transform (DFT). Let $\Q$ be the orthonormal 2D-DFT operator such
that
\begin{equation}
  \label{eq:dft}
  (\Q h_i)[\mathbf{k}] = N^{-1} \sum_{n_1=0}^{K-1}\sum_{n_2=0}^{K-1} h_i[n_1, n_2] e^{-j \frac{2 \pi k_1 n_1}{N}}e^{-j \frac{2 \pi k_2 n_2}{N}}
\end{equation}
for $\mathbf{k} = [k_1, k_2]$ and $0 \leq k_1, k_2 < N$;  that is, the length $N$ 2D-DFT of the filter
$h_i$ padded with $N - K$ zeros in each dimension. Define $\mathcal{D}_i \in \Cbb^{N \times N}
\to \Cbb^{N \times N}$ as the operator that multiplies pointwise by $\Q h_i$:  for $u \in
\Cbb^{N\times N}$, we have $(\mathcal{D}_i u)(\mathbf{k}) = (\Q h_i)(\mathbf{k}) \cdot
u(\mathbf{k})$. The cyclic convolution operator $\mathcal{H}_i$ has eigenvalue decomposition
$\Q^* \mathcal{D}_i \Q$.  We can use this channel-wise decomposition to find the spectrum of
$\mathcal{H}^* \mathcal{H}$:
\begin{lemma}
  \label{lem:spectrum}
  The $N^2$ eigenvalues of the undecimated cyclic analysis-synthesis filter bank $\mathcal{H}^* \mathcal{H}$ are
  given by $\sum_{i=1}^{N_c} \abs{(\Q h_i)[\mathbf{k}]}^2$ for $\mathbf{k} = [k_1, k_2]$
  and $0 \leq k_1, k_2 < N$.
\end{lemma}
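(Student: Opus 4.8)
The plan is to exploit the ``stacked'' construction $\mathcal{H} = \sum_{i=1}^{N_c} e_i \otimes \mathcal{H}_i$ to collapse the Gram operator into a sum of single-channel Gram operators, and then to observe that all $N_c$ channels are diagonalized by the \emph{same} orthonormal 2D-DFT $\Q$, so that this sum stays diagonal in the DFT basis and its diagonal entries can be read off directly. First I would compute the adjoint $\mathcal{H}^*$. For an output $y = \sum_j e_j \otimes y_j$ in $\Rbb^{N_c} \otimes \Rbb^{N \times N}$, expanding $\ip{\mathcal{H} x, y}$ and using the orthonormality $\ip{e_i, e_j} = \delta_{ij}$ shows $\mathcal{H}^* y = \sum_{i=1}^{N_c} \mathcal{H}_i^* y_i$. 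Composing with $\mathcal{H}$ then gives the key reduction
\begin{equation}
  \mathcal{H}^* \mathcal{H} = \sum_{i=1}^{N_c} \mathcal{H}_i^* \mathcal{H}_i,
\end{equation}
since the cross terms vanish.

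Next I would substitute the channel-wise eigendecomposition $\mathcal{H}_i = \Q^* \mathcal{D}_i \Q$ established just above the lemma. Because $\Q$ is orthonormal, $\Q \Q^* = \mathcal{I}$ and $\mathcal{H}_i^* = \Q^* \mathcal{D}_i^* \Q$, so that
\begin{equation}
  \mathcal{H}_i^* \mathcal{H}_i = \Q^* \mathcal{D}_i^* \Q \Q^* \mathcal{D}_i \Q = \Q^* \mathcal{D}_i^* \mathcal{D}_i \Q.
\end{equation}
Each $\mathcal{D}_i$ is pointwise multiplication by $\Q h_i$, so its adjoint multiplies pointwise by $\overline{\Q h_i}$ and $\mathcal{D}_i^* \mathcal{D}_i$ is pointwise multiplication by $\abs{(\Q h_i)[\mathbf{k}]}^2$, hence diagonal. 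Summing over $i$ and factoring out the common $\Q^*$ and $\Q$ yields
\begin{equation}
  \mathcal{H}^* \mathcal{H} = \Q^* \Big( \sum_{i=1}^{N_c} \mathcal{D}_i^* \mathcal{D}_i \Big) \Q.
\end{equation}
Because $\Q$ is unitary and the bracketed operator is diagonal in the DFT basis, this is an eigendecomposition of $\mathcal{H}^* \mathcal{H}$. Reading off the diagonal entry at frequency $\mathbf{k}$ gives the eigenvalue $\sum_{i=1}^{N_c} \abs{(\Q h_i)[\mathbf{k}]}^2$, and letting $\mathbf{k}$ range over $0 \leq k_1, k_2 < N$ accounts for all $N^2$ eigenvalues.

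The computation is routine; the single point that carries the argument is that every channel shares the \emph{same} diagonalizing transform $\Q$. This is exactly what makes $\sum_i \mathcal{D}_i^* \mathcal{D}_i$ diagonal rather than merely a sum of operators diagonal in unrelated bases, and is what lets the spectrum decouple into a pointwise sum over frequencies. The only detail requiring care is the bookkeeping for the adjoint of the tensor-stacked operator $\mathcal{H}$, which follows immediately from orthonormality of the standard basis $\{e_i\}$ in $\Rbb^{N_c}$; notably, unlike the linear-convolution condition in \eqref{eq:phi_pr}, no multidimensional spectral factorization is needed here.
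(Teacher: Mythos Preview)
Your proposal is correct and follows essentially the same approach as the paper: collapse $\mathcal{H}^*\mathcal{H}$ to $\sum_i \mathcal{H}_i^*\mathcal{H}_i$ via the orthonormality of the $e_i$, substitute the shared diagonalization $\mathcal{H}_i = \Q^*\mathcal{D}_i\Q$, and read off the eigenvalues from the resulting $\Q^*\big(\sum_i \mathcal{D}_i^*\mathcal{D}_i\big)\Q$. Your write-up simply spells out the adjoint bookkeeping and the pointwise form of $\mathcal{D}_i^*\mathcal{D}_i$ more explicitly than the paper does.
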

\begin{proof}
  We have
  \begin{align*}
    \mathcal{H}^* \mathcal{H} &= \sum_{i=1}^{N_c} (e_i \otimes \mathcal{H}_i)^*
    (e_i \otimes \mathcal{H}_i) = \sum_{i=1}^{N_c} \mathcal{H}_i^* \mathcal{H}_i \\
                              &= \Q^* \sum_{i=1}^{N_c} \mathcal{D}_i^* \mathcal{D}_i \Q = \Q^* \mathcal{D} \Q
  \end{align*}
  where $(\mathcal{D} u)[\mathbf{k}] = \sum_{i=1}^{N_c} \abs{(\Q h_i)[\mathbf{k}]}^2 \cdot u[\mathbf{k}]$.
  \end{proof}
  The quantity $\abs{ (\Q h_i)[\mathbf{k}}]^2$ is the squared magnitude response of the $i$-th
  filter evaluated at the DFT frequency $\mathbf{k}$, and the eigenvalues of $\mathcal{H}^*
  \mathcal{H}$ are the sum over the $N_c$ channels of these squared magnitude responses. As the DFT
  consists of samples of the DTFT, by Lemma \ref{lem:spectrum} and \eqref{eq:phi_pr}, the
  eigenvalues of $\mathcal{H}^* \mathcal{H}$ can be seen to be samples of the trigonometric
  polynomial $\varphi(\ww)$ over the set $\Theta_N = \left\{ \left(\frac{2 \pi k_1}{N}, \frac{2 \pi
        k_2}{N}\right) : 0\leq k_1, k_2 < N \right\}$.

Recall that $\mathcal{H}$ implements a frame expansion only if the smallest eigenvalue of
$\mathcal{H}^* \mathcal{H}$ is strictly positive \cite{Christensen2003}.  We have
the following PR condition for cyclic convolution filter banks:
\begin{corollary}
  \label{cor:pr_condition}
  The undecimated cyclic filter bank $\mathcal{H}$ implements a frame expansion for $\Rbb^{N \times
    N}$ if and only if $\sum_{i=1}^{N_c} \abs{ (\Q h_i)[\mathbf{k}]}^2 > 0$ for $0 \leq k_1, k_2 <
  N$.  If $\mathcal{H}$ implements a frame expansion, the upper and lower frame bounds  are
  $\min_\mathbf{k} \sum_{i=1}^{N_c} \abs{ (\Q h_i)[\mathbf{k}]}^2$
  and
  $\max_\mathbf{k} \sum_{i=1}^{N_c} \abs{ (\Q h_i)[\mathbf{k}]}^2$.
\end{corollary}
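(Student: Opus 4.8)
The plan is to read the corollary off directly from \cref{lem:spectrum} together with the elementary fact that the tightest frame bounds of a finite collection of vectors are the extreme eigenvalues of its frame (Gram) operator. First I would pin down what that frame operator is. By \cref{prop:filter_bank} the entries of $\mathcal{H}x$ are precisely the inner products $\ip{\Rj{j}^* w^i, x}$ of $x$ against the candidate frame vectors $\{\Rj{j}^* w^i\}$ introduced in \cref{sub:frames} (in the undecimated case $s=1$ every shift is retained), so that
\[
  \sum_{i,j} \abs{\ip{\Rj{j}^* w^i, x}}^2 = \norm{\mathcal{H}x}_2^2 = \ip{\mathcal{H}^* \mathcal{H} x, x}
\]
for every $x \in \Rbb^{N \times N}$. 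Hence $\mathcal{H}^* \mathcal{H}$ is exactly the frame operator associated with the collection, and the frame sum in \eqref{eq:frame_condition} equals the quadratic form of $\mathcal{H}^* \mathcal{H}$.

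Next I would invoke the spectral theorem for the self-adjoint, positive semidefinite operator $\mathcal{H}^* \mathcal{H}$ on the finite-dimensional space $\Rbb^{N \times N}$. The Rayleigh quotient bounds give
\[
  \lambda_{\min} \norm{x}_2^2 \le \ip{\mathcal{H}^* \mathcal{H} x, x} \le \lambda_{\max} \norm{x}_2^2
\]
for all $x$, with both inequalities attained at the corresponding eigenvectors; therefore $\lambda_{\min}$ and $\lambda_{\max}$ are the optimal (largest admissible lower and smallest admissible upper) frame bounds. Comparing with the frame definition \eqref{eq:frame_condition}, the collection is a frame if and only if these bounds satisfy $0 < A \le B < \infty$. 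Because the space is finite-dimensional, $\lambda_{\max}$ is automatically finite, so the only binding requirement is $\lambda_{\min} > 0$, i.e. $\mathcal{H}^* \mathcal{H}$ has trivial kernel — which is exactly the condition, recalled immediately before the corollary, for $\mathcal{H}$ to implement a frame expansion.

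Finally I would substitute the spectrum from \cref{lem:spectrum}: the eigenvalues of $\mathcal{H}^* \mathcal{H}$ are precisely $\sum_{i=1}^{N_c} \abs{(\Q h_i)[\mathbf{k}]}^2$ ranging over $0 \le k_1, k_2 < N$, so $\lambda_{\min} = \min_{\mathbf{k}} \sum_{i} \abs{(\Q h_i)[\mathbf{k}]}^2$ and $\lambda_{\max} = \max_{\mathbf{k}} \sum_{i} \abs{(\Q h_i)[\mathbf{k}]}^2$. Then $\lambda_{\min} > 0$ is equivalent to $\sum_{i} \abs{(\Q h_i)[\mathbf{k}]}^2 > 0$ holding at every $\mathbf{k}$, which is the stated PR/frame criterion, and the min and max are the lower and upper frame bounds. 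There is essentially no hard step here — the corollary is a direct specialization of \cref{lem:spectrum}. The only points requiring a little care are \textbf{(i)} verifying that $\mathcal{H}^* \mathcal{H}$ really is the frame operator of $\{\Rj{j}^* w^i\}$, so that eigenvalue bounds translate faithfully into frame bounds, and \textbf{(ii)} noting that finite-dimensionality makes the upper bound automatic, so positivity of the minimum eigenvalue is the sole condition separating a frame from a non-frame.
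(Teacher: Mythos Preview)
Your proposal is correct and matches the paper's own treatment: the corollary is not given a separate proof in the paper but is presented as an immediate consequence of \cref{lem:spectrum} together with the recalled fact (cited just before the corollary) that $\mathcal{H}$ is a frame expansion iff the smallest eigenvalue of $\mathcal{H}^*\mathcal{H}$ is strictly positive. Your write-up simply makes explicit the Rayleigh-quotient step and the identification of $\mathcal{H}^*\mathcal{H}$ with the frame operator that the paper leaves to the reader.
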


Whereas the PR condition for a linear convolution filter bank must hold over the
unit circle, the PR condition for cyclic convolution filter bank involves only the $N^2$ DFT
frequencies.

The factorization $\mathcal{H}^* \mathcal{H} = \mathcal{F}^* \mathcal{D} \mathcal{F}$ also provides an
easy way to compute the (minimum norm) synthesis filter bank
$\mathcal{H^\dagger}$ that satisfies $\mathcal{H}^\dagger \mathcal{H} = \mathcal{I}$.
We have $\mathcal{H}^\dagger = (\mathcal{H}^* \mathcal{H})^{-1} \mathcal{H}^*$, and the necessary
inverse is given by $(\mathcal{H}^* \mathcal{H})^{-1} = \Q^* \mathcal{D}^{-1} \Q$.

\subsubsection{Return to Linear Convolution}
We now want to link the PR conditions for cyclic and linear convolution filter banks. Recently, we
have shown \cite{Pfister2018} that the minimum value of a real, multivariate trigonometric
polynomial can be lower bounded given sufficiently many uniformly spaced samples of the polynomial,
provided that the polynomial does not vary too much over the sampling points.

\begin{theorem}
  \label{thm:poly_nn_condition}
  \cite{Pfister2018}
  Let $\varphi(\ww)$ be a real, non-negative, two-dimensional trigonometric polynomial with maximum
  component order $n$.
  Define $\Theta_N = 
  \left\{ \left(\frac{2 \pi k_1}{N}, \frac{2 \pi
        k_2}{N}\right) : 0\leq k_1, k_2 < N \right\}$ where $N \geq 2n + 1$. If 
  $\kappa_N \triangleq \frac{\max_{\ww \in \Theta_N} \varphi(\ww)}
  {\min_{\ww \in \Theta_N} \varphi(\ww)}$ satisfies
  \begin{equation}
    \kappa_N \leq  \frac{N}{n} - 1,
  \end{equation}
  then $\varphi(\ww) > 0$ for all $\omega_1, \omega_2 \in [0, 2\pi)$.
\end{theorem}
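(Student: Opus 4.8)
The plan is to argue the contrapositive. Write $m_N \triangleq \min_{\ww \in \Theta_N}\varphi(\ww)$ and $M_N \triangleq \max_{\ww \in \Theta_N}\varphi(\ww)$, so that $\kappa_N = M_N/m_N$ (finiteness of $\kappa_N$ already forces $m_N > 0$), and assume $\varphi$ fails to be strictly positive. Since $\varphi$ is nonnegative, it then attains the value $0$ at some $\ww^\star \in [0,2\pi)^2$, and I will show this forces $\kappa_N > N/n - 1$. The structural fact I would lean on is that $N \ge 2n+1$ makes the grid $\Theta_N$ information-lossless: because $\varphi$ has component order at most $n$, its Fourier support lies in $\{-n,\dots,n\}^2$, so sampling on the $N \times N$ grid produces no aliasing. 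Concretely, every Fourier coefficient is an exact grid average, and $\varphi$ together with all of its derivatives can be written at any point as an explicit linear combination of the samples $\{\varphi(\ww)\}_{\ww\in\Theta_N}$ through a tensorized Dirichlet-kernel reproducing identity. The guard band of width $N - 2n \ge 1$ between the signal band and its first alias is the quantity that will drive the final constant.

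The essential use of nonnegativity is that the zero $\ww^\star$ is a global minimizer, hence a critical point: $\varphi(\ww^\star) = 0$ and $\nabla\varphi(\ww^\star) = 0$. Let $\ww_0 \in \Theta_N$ be the nearest grid point, so each coordinate of $\ww_0 - \ww^\star$ has magnitude at most $\pi/N$. Expanding to second order about $\ww^\star$ and bounding the Hessian by a Bernstein--Markov inequality (the second partials of a component-order-$n$ trigonometric polynomial are controlled by $n^2$ times its supremum) gives $m_N \le \varphi(\ww_0) \le c\,(n/N)^2 \sup_\ww \varphi(\ww)$. A first-order expansion already yields the correct linear order in $N/n$, but it is the vanishing gradient --- available only because $\varphi \ge 0$ --- that upgrades this to the second-order estimate needed to clear the stated threshold. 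To replace the continuum supremum by the measurable grid maximum I would invoke a Marcinkiewicz--Zygmund inequality valid for $N \ge 2n+1$, namely $\sup_\ww \varphi(\ww) \le C\,M_N$. Chaining the two bounds produces $m_N \le c\,C\,(n/N)^2 M_N$, i.e. $\kappa_N \gtrsim (N/n)^2$ up to constants, which dominates $N/n - 1$ on the whole range $N \ge 2n+1$ and contradicts the hypothesis, completing the contrapositive.

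The hard part is not the qualitative mechanism but the sharp constants, since the theorem commits to the exact threshold $N/n - 1$ rather than a bound of the right order. In two dimensions there is no Fej\'er--Riesz factorization, so I cannot write $\varphi = \lvert p \rvert^2$ and reduce to a univariate argument; instead the Bernstein--Markov and Marcinkiewicz--Zygmund estimates must be used in genuinely multivariate (or carefully tensorized) form, with their constants tracked through the chain without slack. The tightest regime is the critical case $N = 2n+1$, where the guard band collapses to width one and the claimed threshold $N/n - 1 = 1 + 1/n$ is smallest; verifying that the combined estimate still dominates $N/n - 1$ there is where the argument is most delicate. I would expect the cleanest route to the precise constant to go through the reproducing identity of the first paragraph, feeding in both $\varphi(\ww^\star) = 0$ and $\nabla\varphi(\ww^\star) = 0$ simultaneously so that the leading negative lobes of the Dirichlet kernel cancel, rather than through separately estimated Taylor and sampling inequalities.
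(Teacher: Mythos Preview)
The paper does not prove this theorem: it is quoted verbatim from \cite{Pfister2018} and used as a black box (see the sentence ``Recently, we have shown \cite{Pfister2018} that \ldots'' immediately preceding the statement, and the absence of any proof or appendix devoted to it). So there is no in-paper argument to compare your proposal against.

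On the merits of your sketch: the qualitative mechanism is sound. Nonnegativity forces any zero of $\varphi$ to be a critical point, a second-order Taylor estimate at the nearest grid node combined with a Bernstein--Markov bound on the Hessian gives $m_N \lesssim (n/N)^2 \|\varphi\|_\infty$, and a Marcinkiewicz--Zygmund inequality passes from $\|\varphi\|_\infty$ to $M_N$. This correctly yields $\kappa_N \gtrsim (N/n)^2$ up to absolute constants, which is even of stronger order than the linear threshold $N/n - 1$ for large $N/n$.

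The gap you yourself flag is real and unresolved: the chain of classical inequalities you invoke each carries an absolute constant (the Bernstein constant, the MZ constant, the factor from the $\ell^\infty$ distance to the grid), and you give no mechanism for making their product small enough to clear the bar at the critical endpoint $N = 2n+1$, where the claimed threshold is $1 + 1/n$. Your final paragraph gestures at a sharper route via the Dirichlet reproducing identity with both $\varphi(\ww^\star)=0$ and $\nabla\varphi(\ww^\star)=0$ fed in simultaneously, but this is a hope rather than an argument: you do not write down the identity, identify which kernel lobes cancel, or show that what remains is bounded by $(N/n - 1)^{-1}$. As written, the proposal establishes a theorem with \emph{some} constant in place of $N/n - 1$, not the stated one. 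Since the present paper defers entirely to \cite{Pfister2018} for the proof, you would need to consult that reference to see how the exact constant is obtained.
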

If $\varphi(\ww)$ is defined by \eqref{eq:phi_pr}, then
$\kappa_N$ is the frame condition number of
a cyclic convolution filter bank operating on $N \times N$ images.  
Theorem \ref{thm:poly_nn_condition} is the link between PR properties of
cyclic and linear convolution filter banks we desired, and we have the following PR condition for linear
convolution filter banks:
\begin{corollary}
  \label{cor:linear_pr_condition}
  Let $\mathcal{H}_C$ be an undecimated cyclic convolution filter bank with $K \times K$ filters
  that operates on $N \times N$ images, with frame condition number $\kappa_N$.  Let $\mathcal{H}$
  be a linear convolution filter bank constructed from the same filters as $\mathcal{H}_C$.  Then
  $\mathcal{H}$ is PR if $\kappa_N \leq \frac{N}{K-1} - 1$.
\end{corollary}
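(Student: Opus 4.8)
The plan is to reduce the claim to Theorem~\ref{thm:poly_nn_condition} by identifying the correct trigonometric polynomial and recognizing $\kappa_N$ as its sampled condition number. First I would recall the PR criterion for undecimated linear convolution filter banks established just above \eqref{eq:phi_pr}: $\mathcal{H}$ is PR if and only if its channel filters have no common zero on the unit circle, equivalently if and only if $\varphi(\ww) = \sum_{i=1}^{N_c}\abs{\hat{H}_i(\ww)}^2 > 0$ for all $\ww \in [0,2\pi)^2$. Since $\mathcal{H}$ and $\mathcal{H}_C$ are built from the same $K \times K$ filters, they give rise to the same $\varphi$, which is a real, non-negative, two-dimensional trigonometric polynomial of maximum component order $n = K-1$. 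Thus establishing that $\mathcal{H}$ is PR amounts to proving $\varphi > 0$ everywhere, and Theorem~\ref{thm:poly_nn_condition} is exactly the tool that certifies global positivity from samples.

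Next I would connect $\kappa_N$ to the sampled values of $\varphi$. By Lemma~\ref{lem:spectrum} (equivalently Corollary~\ref{cor:pr_condition}) the eigenvalues of $\mathcal{H}_C^* \mathcal{H}_C$ are precisely the values $\varphi(\ww)$ for $\ww \in \Theta_N$, so the frame bounds of $\mathcal{H}_C$ equal $\min_{\ww \in \Theta_N}\varphi(\ww)$ and $\max_{\ww \in \Theta_N}\varphi(\ww)$, whence the frame condition number satisfies $\kappa_N = \frac{\max_{\ww \in \Theta_N}\varphi(\ww)}{\min_{\ww \in \Theta_N}\varphi(\ww)}$. In particular, because $\mathcal{H}_C$ is assumed to possess a finite condition number, its smallest frame bound is strictly positive, so $\min_{\ww \in \Theta_N}\varphi(\ww) > 0$ and $\kappa_N$ is well defined. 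This is the identification that lets the hypothesis of the corollary be read directly as a hypothesis on $\varphi$ sampled over $\Theta_N$.

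I would then invoke Theorem~\ref{thm:poly_nn_condition} with $n = K-1$. The corollary's hypothesis $\kappa_N \leq \frac{N}{K-1} - 1$ coincides with the theorem's sampling condition $\kappa_N \leq \frac{N}{n} - 1$, so the theorem yields $\varphi(\ww) > 0$ for all $\ww \in [0,2\pi)^2$, and the PR criterion from the first step finishes the argument. The one bookkeeping point to verify is the grid-size requirement $N \geq 2n+1 = 2K-1$ of Theorem~\ref{thm:poly_nn_condition}: I would note that $\kappa_N \geq 1$ always, so the hypothesis already forces $\frac{N}{K-1} - 1 \geq 1$, i.e. $N \geq 2(K-1)$, and in the regime of interest (the image much larger than the filter, $N \gg K$) the slightly stronger bound $N \geq 2K-1$ holds automatically.

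I do not expect a genuine obstacle here: once Theorem~\ref{thm:poly_nn_condition} is available, the corollary is essentially a translation, and all of the real difficulty has been front-loaded into that sampling result (proved in \cite{Pfister2018}). The only steps requiring care are the two identifications above---that $\varphi$ has component order exactly $K-1$, so that $n = K-1$ is the right order to feed the theorem, and that $\kappa_N$ coincides with the sampled condition number of $\varphi$ rather than some other conditioning quantity---since a mismatch in either would alter the threshold on $N$.
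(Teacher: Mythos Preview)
Your proposal is correct and follows exactly the paper's approach: the paper's proof is the single line ``Take $n = K-1$ in Theorem~\ref{thm:poly_nn_condition},'' and you have simply spelled out the identifications (that $\varphi$ has component order $K-1$ and that $\kappa_N$ equals the sampled condition number of $\varphi$ via Lemma~\ref{lem:spectrum}/Corollary~\ref{cor:pr_condition}) that make this one-liner work.
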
 
\begin{proof}
  Take $n = K - 1$ in Theorem \ref{thm:poly_nn_condition}.
\end{proof} 
Corollary \ref{cor:linear_pr_condition} states that well-conditioned PR cyclic convolution filter banks,
with filters that are short relative to image size $N$,  are also PR \emph{linear} convolution filter banks.

The PR conditions of Corollaries \ref{cor:pr_condition} and \ref{cor:linear_pr_condition} are
significantly more general than the patch based PR conditions. For example, $W \in \Rbb^{N_c \times
  K^2}$ can be left-invertible only if $N_c \geq K^2$. The PR conditions of Corollaries
\ref{cor:pr_condition} and \ref{cor:linear_pr_condition} have no such requirements; indeed, a single
channel ``filter bank'' can be PR. Our PR conditions are easy to check, requiring only
the 2D DFT of $N_c$ small filters.
\subsection{The role of sparsification}
\label{sec:role-sparsification}
\begin{figure}[t]
\begin{center}
  \scalebox{0.95}{

\begin{tikzpicture}[decoration={brace,mirror, amplitude=7}]
\makeatletter
\DeclareRobustCommand{\rvdots}{%
  \vbox{
    \baselineskip4\p@\lineskiplimit\z@
    \kern-\p@
    \hbox{.}\hbox{.}\hbox{.}
  }}

\makeatother
\matrix (m1) [row sep=1.5mm, column sep=1.8mm]
    {
      \node[coordinate]         (m00) {}; &
      \node[coordinate]         (m01) {}; &
      \node[dspnodeopen]        (m02) {}; &
      \node[coordinate]         (m03) {}; &
      \node[dspsquare]          (m04) {$h_1$}; &
      \node[coordinate]         (m05) {}; &
      \node[dspsquare]          (m06) {\ $\prox{\psi}{\cdot}{\nu}$ \ }; &
      \node[coordinate]         (m07) {}; &
      \node[dspnodeopen]        (m08) {$Z_{1,:}$};  &
      \node[coordinate]         (m09) {}; &
      \node[dspsquare]          (m010) {$\bar{h}_1$}; &
      \node[coordinate]         (m011) {}; &
      \node[dspnodeopen]        (m012) {}; &
\\

      \node[dspnodeopen]        (m10) {$x$};  &
      \node[coordinate]         (m11) {}; &
      \node[dspnodeopen]        (m12) {}; &
      \node[coordinate]         (m13) {}; &
      \node[dspsquare]          (m14) {$h_2$}; &
      \node[coordinate]         (m15) {}; &
      \node[dspsquare]          (m16) {\ $\prox{\psi}{\cdot}{\nu}$ \ }; &
      \node[coordinate]         (m17) {}; &
      \node[dspnodeopen]        (m18) {$Z_{2,:}$};  &
      \node[coordinate]         (m19) {}; &
      \node[dspsquare]          (m110) {$\bar{h}_2$}; &
      \node[coordinate]         (m111) {}; &
      \node[dspnodeopen]        (m112) {}; &
      \node[coordinate]         (m113) {}; &
      \node[dspsquare]          (m114) {$g$}; &
      \node[coordinate]         (m115) {}; &
      \node[dspnodeopen]        (m116) {$\hat{x}$}; &
\\
      \node[coordinate]         (m20) {}; &
      \node[coordinate]         (m21) {}; &
      \node                     (m22) {\rvdots}; &
      \node[coordinate]         (m23) {}; &
      \node                     (m24) {\rvdots}; &
      \node[coordinate]         (m25) {}; &
      \node                     (m26) {\rvdots}; &
      \node[coordinate]         (m27) {}; &
      \node                     (m28) {\rvdots};  &
      \node[coordinate]         (m29) {}; &
      \node                     (m210) {\rvdots};  &
      \node[coordinate]         (m211) {}; &
      \node                     (m212) {\rvdots};  &
\\

      \node[coordinate]         (m30) {}; &
      \node[coordinate]         (m31) {}; &
      \node[dspnodeopen]        (m32) {}; &
      \node[coordinate]         (m33) {}; &
      \node[dspsquare]          (m34) {$h_{N_c}$}; &
      \node[coordinate]         (m35) {}; &
      \node[dspsquare]          (m36) {\ $\prox{\psi}{\cdot}{\nu}$ \ }; &
      \node[coordinate]         (m37) {}; &
      \node[dspnodeopen]        (m38) {$Z_{N_c,:}$};  &
      \node[coordinate]         (m39) {}; &
      \node[dspsquare]          (m310) {$\bar{h}_{N_c}$}; &
      \node[coordinate]         (m311) {}; &
      \node[dspnodeopen]        (m312) {}; &
      \node[coordinate]         (m313) {}; &
      \node[coordinate]         (m314) {}; &
      \node[coordinate]         (m315) {}; &
      \node[coordinate]         (m316) {}; &
      \\
    };

\foreach \i [evaluate = \i as \j using int(\i+2)] in {2,4,...,10}
		\draw[dspconn] (m0\i) -- (m0\j);
\foreach \i [evaluate = \i as \j using int(\i+2)] in {2,4,...,14}
		\draw[dspconn] (m1\i) -- (m1\j);
\foreach \i [evaluate = \i as \j using int(\i+2)] in {2,4,...,10}
		\draw[dspconn] (m3\i) -- (m3\j);

	\draw[dspconn] (m10) -- (m12);
	\draw[dspconn] (m12) -- (m02);
	\draw[dspconn] (m12) -- (m22);
	\draw[dspconn] (m22) -- (m32);

	\draw[dspconn] (m012) -- (m112);
	\draw[dspconn] (m212) -- (m112);
	\draw[dspconn] (m312) -- (m212);

    \draw [decorate] ([yshift=-5mm]m32.west) --node[below=2mm]{$\mathcal{H}$} ([yshift=-5mm] m35.south);
    \draw [decorate] ([yshift=-5mm]m39.west) --node[below=2mm]{$\mathcal{H^\dagger}$} ([yshift=-5mm] m316.east);

\end{tikzpicture}

\end{center}
\caption{Analysis-synthesis filter bank generated by sparsifying transform $W$ and separable
  sparsity penalty $\psi$.
  Here, $h_i = \Rj{1}^* w^{i}$,
  the impulse response $\bar{h}_i$ is the flipped version of $h_i$, and
  $g$ is the impulse response for the filter $(\mathcal{H}^* \mathcal{H})^{-1}$.
}
\label{fig:filterbank_block_diagram}
\end{figure}
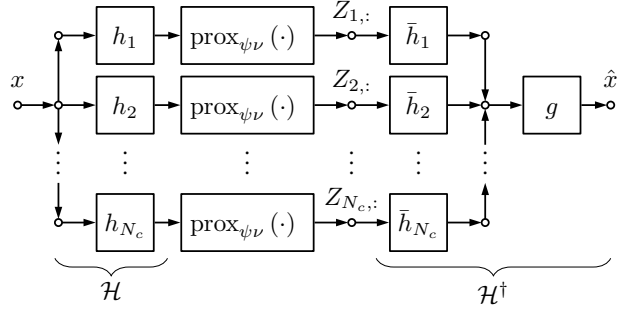

We have interpreted the transformed image patches $WX$ as the output of a filter bank. The sparse
matrix $Z$ in~\eqref{eq:tsc_p} can be viewed as passing the filter bank output through a nonlinear
function implementing $\prox{\psi}{WX}{\nu}$. This interpretation is particularly appealing whenever
$\psi$ is coordinate-wise separable, meaning $\psi(z) = \sum_i \psi(z_i)$. Then the transform sparse
code for the $j$-th channel depends only on the $j$-th filtered channel and is given by
$\prox{\psi}{ \mathcal{H}_j x}{\nu}$. The resulting nonlinear analysis-synthesis filter bank is
illustrated in Figure \ref{fig:filterbank_block_diagram}. If the input signal $x$ is indeed
perfectly sparsifiable by the filter bank (\ie, $\mathcal{H}x = \prox{\psi}{\mathcal{H}x}{\nu}$),
then the output of the analysis stage is invariant to the application of the nonlinearity and
the entire system functions as an identity operator.

We can replace the usual soft or hard thresholding functions by exotic nonlinearities, such as the
firm thresholding function \cite{Gao1997} or generalized $p$-shrinkage functions
\cite{Chartrand2014}. These nonlinearities have led to marginally improved performance in image
denoising \cite{Chen2014} and compressed sensing \cite{Chartrand2009}. Alternatively, we can abandon
the interpretation of the nonlinearity as a proximal operator and instead learn a custom
nonlinearity for each channel, either in a supervised setting \cite{Schmidt2014, Chen2016} or in an
unsupervised setting with a Gaussian noise model \cite{Pfister2017}.

Filter bank sparsifying transforms share many similarities with convolutional autoencoder (CAE)
\cite{Masci2011}. Both consist of a filter bank followed by a channelwise nonlinearity. However, in
the case of a CAE, the ``decoder'' $\mathcal{H}^\dagger$ is typically discarded and the
output of the ``encoder'', $\prox{\psi}{\mathcal{H} x}{\nu}$, is passed into additional layers for
further processing.

\subsection{Principal Component Filter Banks}
The previous sections have shown that transform learning can be viewed as adapting a filter bank to
sparsify our data. A similar problem is the design of \emph{principal component filter banks}
(PCFB). Let $\mathscr{C}$ denote a set of orthonormal filter banks, such as $M$-channel filter banks
with a particular downsampling factor, and let $x$ be a given input signal. A filter bank
$\mathcal{H}^{\mathscr{P}}$ is said to be a PCFB for the class $\mathscr{C}$ and the input $x$ if,
for any $\mathcal{H} \in \mathscr{C}$ and all $m=1, \hdots, M$,
\begin{equation}
  \label{eq:majorization}
\sum_{i=1}^m a^2_ i \geq \sum_{i=1}^m b^2_i
\end{equation}
where $a_i$ and $b_i$ are the $\ell_2$ norms of the $i$-th channel of $\mathcal{H}^{\mathscr{P}} x$
and $\mathcal{H} x$, respectively \cite{Tsatsanis1995}. A PCFB provides compaction of the output
energy into lower-indexed channels, and thus minimal $\ell_2$ error when reconstructing a signal
from $m < M$ filtered components. The existence and design of PCFBs in 1D is well studied
\cite{Moulin1998}. However, the design of multidimensional FIR PCFBs is
again made difficult due to the lack of a multidimensional spectral factorization theorem, although
suboptimal algorithms exist \cite{Xuan1995, Xuan1998}.

There are superficial similarities between the design of PCFBs and transform learning, especially when
$W$ is restricted to be square and orthonormal. The sparsity of the transformed signal implies a
form of energy compaction. However, we impose no constraints on location of non-zero coefficients
and thus the learned transform is unlikely to satisfy the majorization property
\eqref{eq:majorization}. Further, an orthogonal $W$ matrix induces an orthogonal filter bank only if
non-overlapping patches are used. The PCFB for such a block transformation is known to be the
Karhunen-Loeve transformation of the data \cite{Unser1993}, from which the learned transform
can differ substantially \cite{Ravishankar2013b}. Conversely, the energy majorization property
\eqref{eq:majorization} does not imply sparsity of the channel outputs, and a PCFB will not, in
general, be a filter bank sparsifying transform.

\section{Learning a sparsifying filter bank}
\label{sec:learning}

We briefly review methods to incorporate an adaptive sparsity model to the solution of the inverse
problems. We consider two paradigms: in the ``universal'' paradigm, our sparsifying
transform $\mathcal{H}$ is learned off-line over a set of training data. In the ``adaptive''
paradigm, the transform is learned during the solution of the inverse problem, typically by
alternating between a signal recovery phase and a transform update phase.
For synthesis dictionary
learning it has been reported that the adaptive method typically works better for lower noise levels
while the universal method shines in high noise \cite{Elad2006}.
In both paradigms, we learn sparsifying transform by minimizing a function that is designed to
encourage ``good'' transforms.  

\subsection{Problem Formulation}
We now develop a method to learn an undecimated filter bank sparsifying transform that
takes advantage of the flexibility granted by the PR conditions of
\cref{cor:pr_condition,cor:linear_pr_condition}.
Let $x$ be a training signal, possibly drawn from a set of training signals.
We wish to learn a filter bank sparsifying transform that satisfies four properties:
\begin{enumerate}[(D1)]
  \item $\mathcal{H}x$ should be sparse; \label{d:sparse}
  \item $\mathcal{H}$ should be left invertible and well conditioned; \label{d:frame}
  \item $\mathcal{H}$ should contain no duplicated or uniformly zero filters;\label{d:coherence}
  \item $\mathcal{H}$ should be have few degrees of freedom\label{d:dof}.
\end{enumerate}
Properties (D\ref{d:sparse}) -- (D\ref{d:coherence}) ensure our transform is ``good'' in 
that it sparsifies the training data, is a well-behaved frame over the space of images,
and is not overly
redundant. Property (D\ref{d:dof}) ensures good generalization properties under the universal
paradigm and prevents overfitting under the adaptive paradigm.

As with previous transform learning approaches, we satisfy (D\ref{d:sparse}) by minimizing
$\frac{1}{2}\norm{\mathcal{H}x - z}^2 + \nu \psi(z)$ where $\psi$ is a sparsity-promoting
functional. The first of term is called the \emph{sparsification error}, as it measures the distance
between $\mathcal{H}x$ and its sparsified version, $z$.

We satisfy (D\ref{d:dof}) by writing the action of the sparsifying transform on the image as $W X$,
where $W \in \Rbb^{N_c \times K^2}$ and where $X$ is formed by extracting and vectorizing $K \times
K$ patches with unit stride. This parameterization ensures that the learned filters are compactly
supported and have only $N_c K^2$ free parameters. We emphasize that $WX$ and $\mathcal{H} x$ are
equivalent modulo a reshaping operation. Both expressions should be thought of independently of the
computational tool used to calculate the results; $WX$ can be implemented using Fourier-based fast
convolution algorithms, just as $\mathcal{H} x$ can be implemented by dense matrix-matrix
multiplication.

The sparsification error is written as
\begin{equation}
\label{eq:se}
f(W, Z, x) = \frac{1}{2} \norm{W X - Z}_F^2,
\end{equation}
where the $j$-th row of $Z \in \Rbb^{N_c \times N^2 }$ is the sparse code for the $j$-th
channel output. We can learn a transform over several images by 
summing the sparsification error for each image.

We promote transforms that satisfy (D\ref{d:frame}) through the penalty $\frac{1}{2}
\sum_{j=1}^{N_c}\norm{W_{j,:}}_2^2 - \log\det \mathcal{H}^* \mathcal{H}$. The log determinant term
ensures that no eigenvalues of $\mathcal{H}^* \mathcal{H}$ become zero, while the $\ell_2$ norm term
ensures that the filters do not grow too large. This penalty can be written as $\sum_{j=1}^{N^2}
\lambda_i - \log{\lambda_i}$, where the $\lambda_i$ are the eigenvalues of $\mathcal{H}^*
\mathcal{H}$ as given by Lemma \ref{lem:spectrum}.  Our proposed penalty serves the role of the
final two terms of the patch-based objective \eqref{eq:pbst_obj}.
The key difference is that the patch-based regularizer acts on the singular values of $W$, while the
proposed regularizer acts on the singular values of $\mathcal{H}$.

To satisfy (D\ref{d:dof}) we write the eigenvalues $\lambda_i$ in terms of the matrix $W$. Let $F
\in \Cbb^{N^2 \times N^2}$ denote the matrix that computes the $N \times N$ orthonormal 2D-DFT for a
vectorized signal, and let $\barQ \in \Cbb^{N^2 \times K^2}$ implement $N\times N$ 2D-DFT of a
zero-padded and vectorized $K \times K$ signal. The $i$-th column of $\bar{F} W^T$ contains the
(vectorized) 2D-DFT of the $i$-th filter. Then $\lambda_i = \sum_{j=1}^{N_c} \abs{\barQ
  W^T}_{i,j}^2$, and
\begin{equation}
  \label{eq:log_det}
  \log\det{\mathcal{H}^* \mathcal{H}} = \sum_{i=1}^{N_F^2} \log \left( \sum_{j=1}^{N_c} \abs{\barQ W^T}_{i,j} \right),
\end{equation}
where the absolute value and squaring operations are taken pointwise. We can reduce the
computational and memory burden of the algorithm by using smaller $N_F \times N_F$ DFTs, provided
that Corollary \ref{cor:linear_pr_condition} implies the corresponding linear convolution filter
bank is PR. We take $N_F = 4 K$, which is suitable for filter banks with condition
number less than $3$.

Similar to earlier work on analysis operator learning \cite{Yaghoobi2011, Yaghoobi2012,
  Yaghoobi2013}, we found that our tight frame penalty often resulted in transforms with many
uniformly zero filters. We prevent zero-norm filters by adding the log-barrier penalty $\sum_{j=1}^{N_c}
-\log\left( \norm{W_{j,:}}_2^2 \right)$.
The combined regularizer is written as
\begin{IEEEeqnarray}{rCl}
J_1(W) &= \frac{1}{2} \sum_{i=1}^{N_c}\norm{W_{i,:}}_2^2 &-
\sum_{i=1}^{N_F^2} {\log \left(\sum_{j=1}^{N_c} \abs{\barQ W^T}^2_{i, j} \right)} \nonumber \\
&&- \sum_{i=1}^{N_c} \log\left( \norm{W_{i,:}}_2^2 \right).
\label{eq:reg_no_coherence}
\end{IEEEeqnarray}
The following proposition (proved in Appendix \ref{app:norm_cond_proof}) indicates that $J_1$
promotes filter bank transforms that satisfy (D\ref{d:frame}).
\begin{proposition}
  \label{prop:reg_min}
  Let $W^*$ be a minimizer of $J_1$, and let $\mathcal{H}$ be the undecimated cyclic convolution filter
  bank generated by the rows of $W^*$. Then $\mathcal{H}$ implements a uniformly normalized tight
  frame expansion over the space of images, with filter squared norms equal to
  $2(1 + N_F^2 / N_c)$ and frame
  constant $2(1 + N_c / N_F^2)$.
\end{proposition}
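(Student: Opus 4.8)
The plan is to collapse this matrix optimization into a one-dimensional problem in the total spectral energy and then read off the minimizer from strict convexity and the equality case of Jensen's inequality. Write $s_j = \norm{W_{j,:}}_2^2$ for the squared filter norms and let $\lambda_i = \sum_{j=1}^{N_c}\abs{\barQ W^T}_{i,j}^2$ denote the eigenvalues of $\mathcal{H}^*\mathcal{H}$ furnished by Lemma \ref{lem:spectrum}, so that $J_1(W) = \tfrac12\sum_j s_j - \sum_i\log\lambda_i - \sum_j\log s_j$. The first step is a Parseval identity, $\sum_{i=1}^{N_F^2}\lambda_i = \sum_{j=1}^{N_c}s_j =: T$. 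Since $\barQ$ is an orthonormal DFT composed with a zero-padding embedding we have $\barQ^*\barQ = I_{K^2}$, so the $i$-summed column energies of $\barQ W^T$ equal the row energies of $W$; hence the eigenvalues and the filter norms are tied to the same total $T$.

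Next I would lower-bound $J_1$ by applying concavity of the logarithm twice. Jensen's inequality gives $-\sum_i\log\lambda_i \ge -N_F^2\log(T/N_F^2)$ and $-\sum_j\log s_j \ge -N_c\log(T/N_c)$, with equality in the former exactly when all $\lambda_i$ coincide (tight frame) and in the latter exactly when all $s_j$ coincide (uniform normalization). Combining, $J_1(W)\ge g(T) := \tfrac12 T - (N_F^2+N_c)\log T + \mathrm{const}$. The scalar function $g$ is strictly convex on $(0,\infty)$, and $g'(T)=0$ yields the unique minimizer $T^* = 2(N_F^2+N_c)$. This one computation already manufactures the target constants: spreading $T^*$ equally gives eigenvalues $\lambda_i = T^*/N_F^2 = 2(1+N_c/N_F^2)$ and filter norms $s_j = T^*/N_c = 2(1+N_F^2/N_c)$.

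To promote the bound into a characterization I must verify it is attained, i.e. that $\min J_1 = g(T^*)$. I would certify this by exhibiting one admissible filter bank that meets the bound: taking every filter equal to the same scaled unit impulse, $W_{j,:} = c\,e_0$, makes each magnitude response flat, $\abs{\barQ W^T}_{i,j}^2 = c^2/N_F^2$, so $\mathcal{H}$ is a uniformly normalized tight frame with $T = N_c c^2$; choosing $c^2 = T^*/N_c$ achieves $J_1 = g(T^*)$. (This construction repeats filters, but $J_1$ contains no incoherence penalty, so it is feasible.) Consequently, for any minimizer $W^*$ the chain $g(T^*) = J_1(W^*) \ge g(T(W^*)) \ge g(T^*)$ must collapse to equalities throughout: equality in the two Jensen steps forces $\mathcal{H}$ to be a tight, uniformly normalized frame, while strict convexity of $g$ forces $T(W^*) = T^*$, thereby fixing the frame constant and the filter norms at the stated values. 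Note this determines the spectrum and the norms but not $W^*$ itself, since many such frames exist.

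The algebra is routine once the Parseval identity is secured; the step I expect to require the most care is establishing attainability of the lower bound, namely that coercivity of $J_1$ on the open domain $\{s_j>0,\ \lambda_i>0\}$ guarantees a minimizer while the impulse construction certifies $\min J_1 = g(T^*)$, together with keeping the bookkeeping between the $N_F^2$ eigenvalues and the $N_c$ filter norms consistent so that a single total $T$ drives both Jensen inequalities simultaneously.
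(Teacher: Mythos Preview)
Your proof is correct and takes a genuinely different route from the paper's. The paper's argument rewrites $J_1$ as a function of the magnitude-squared matrix $V=\abs{\barQ W^T}^2$, computes the partial derivatives $\partial J_1/\partial V_{r,s}$, sets them to zero, and subtracts pairs of these equations to force all row sums (the eigenvalues $\lambda_i$) and all column sums (the filter norms $s_j$) to coincide; substituting back into one stationarity equation then yields the constants $a=2(1+N_c/N_F^2)$ and $b=2(1+N_F^2/N_c)$. Your approach instead couples the $\lambda_i$ and the $s_j$ through the single Parseval constraint $\sum_i\lambda_i=\sum_j s_j=T$, bounds $J_1$ below by two Jensen/AM--GM inequalities, reduces to a strictly convex scalar problem in $T$, and certifies attainability with an explicit impulse filter bank. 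What your route buys is a fully global argument: existence comes from coercivity, and the characterization of minimizers comes from the equality cases in Jensen rather than from a stationary-point analysis. What the paper's route buys is brevity---a handful of derivative manipulations---though it tacitly treats the entries $V_{r,s}$ as unconstrained variables, which requires the observation that the minimizing $V$ is actually realized by some $W$ (your impulse construction would also supply this).
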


Finally, we would like to discourage learning copies of the same filter. One option
is to apply a log barrier to the coherence between each pair of filters \cite{Hawe2013}:
\begin{equation}
\label{eq:coherence}
J_2(W) = \sum_{1 \leq i < j \leq N_c}
-\log\left( 1 -
\left(  \frac{\ip{W_{i,:}, W_{j,:}}} {\norm{W_{i,:}}_2 \norm{W_{j,:}}_2} \right)^2
\right).
\end{equation}
This penalty works well whenever the filters have small support $(K \leq 8)$. For larger
filters, we observed the algorithm often learned filters with disjoint support that are shifted versions of one
another. These filters do not cause a large value in \eref{eq:coherence}, yet provide no advantage
over a single filter. We modify our coherence penalty to discourage filters that differ by only a
linear phase term by applying \eref{eq:coherence} to the squared magnitude responses of our filters,
which we write as $J_2 \left(\left(\abs{\barQ W^T}^2\right)^T\right)$.

Our learning problem is written as
\begin{equation}
 \label{eq:fbst_obj_parameterized}
 \min_{W, Z} f(W, Z, x) + \mu J_1(W) + \lambda J_2\left(W\right) + \nu \psi(Z).
\end{equation}
The scalar $\mu > 0$ controls the strength of the UNTF penalty and should be large enough that the
learned filter bank is well conditioned, so that approximating the eigenvalues using $N_F \times
N_F$ DFTs remains valid. The non-negative scalar parameters $\lambda$ and $\nu$ control the emphasis
given to the coherence and sparsity penalties, respectively.

\subsection{Optimization Algorithm}
We use an alternating minimization algorithm to solve \eref{eq:fbst_obj_parameterized}. In the
\emph{sparse coding step}, we fix $W$ and solve \eref{eq:fbst_obj_parameterized} for $Z$.
In the second stage, called the \emph{transform update step}, we update our transform $W$
by minimizing \eref{eq:fbst_obj_parameterized} with fixed $Z$.  We use superscripts
to indicate iteration number, and we take $\mathcal{H}^{(k)}$ to mean the filter bank
generated using filters contained in the rows of $W^{(k)}$.

The sparse coding step reduces to
\begin{equation}
\label{eq:z_up}
Z^{(k+1)} = \argmin_Z \frac{1}{2}\norm{W^{(k)} X  - Z}_F^2 + \nu \psi(Z)
\end{equation}
with solution $Z^{(k+1)} = \prox{\psi}{\mathcal{H}^{(k)} x}{\nu}$.
Next, with $Z^{(k+1)}$ fixed, we update $W$ by solving
\begin{equation}
\label{eq:W_up}
W^{(k+1)} = \argmin_{W} f(W, Z^{(k+1)}, x) + \mu J_1(W) + \lambda
J_2\left(W\right).
\end{equation}
Unlike the square, patch-based case, we do not have a closed-form solution for \eref{eq:W_up} and we
must resort to an iterative method. The limited-memory BFGS (L-BFGS) algorithm works well in
practice.

The primary bottleneck in this approach is a line search step, which requires multiple
evaluations of the objective function \eqref{eq:fbst_obj_parameterized} with fixed $Z$. The cost of
this computation is dominated by evaluation of the sparsification residual $\norm{WX - Z}_F^2$. With
$X$ and $Z$ fixed we precompute and store the small matrices $G = X X^T$ and $Y = X Z^T$. Then the
sparsification residual is given by
\begin{equation}
  \norm{WX - Z}_F^2 = \trace{W^T W G} - 2 \trace{W Y} + \norm{Z}_F^2,
\end{equation}
We can use FFT-based convolution to compute $WX, X X^T$, and $X Z^T$, leading to an algorithm
whose computational cost per iteration scales gracefully with both image and patch size. Table
\ref{tab:cost} lists the per-iteration cost for each gradient and function evaluation.

\begin{algorithm}[t]
  \caption{Filter bank sparsifying transform learning}
  \label{alg:fbst}
  \begin{algorithmic}[1]
    \INPUT Image $x$, Initial transform $W^{(0)}$
    \State $Z^{(0)} \gets \prox{\psi}{\mathcal{H}^{(0)} x}{\nu}$
    \State $k \gets 0$
    \Repeat
    \State $
    W^{(k+1)} \gets \argmin_{W} f(W, x, Z^{(k)}) + \mu J_1(W) + \lambda J_2(W) $
    \State $Z^{(k+1)} \gets \prox{\psi}{\mathcal{H}^{(k+1)} x}{\nu}$
    \State $k \gets k + 1$
    \Until{Halting Condition}
  \end{algorithmic}
\end{algorithm}

{\renewcommand{\arraystretch}{1.5} 
\begin{table*}
  \centering
  \begin{tabular} {| c | c | c | }
    \hline
    Penalty & Evaluation & Gradient \\
    \hline
    $f(W, Z, x)$      & $\mathcal{O}(N_c N^2 \log(N))$  &
    $\mathcal{O}(N_c K^4 +  N_c N^2 \log(N))$   \\ \hline
    $J_1(W) $ & $\mathcal{O}\left(N_c N_F^2 \log(N_F) + N_F^2 K^2 + N_c K^2  \right)$ &
    $\mathcal{O}(K^4 N_c + (N_c + K^2) N_F^2 \log(N_F)$ \\ \hline
    $J_2(W) $ & $\mathcal{O}\left( N_c^2 K^2 \right)$ & $\mathcal{O}(N_c K^4)$  \\ \hline
  \end{tabular}
  \caption{Computational cost for function and gradient computation.}
  \label{tab:cost}
\end{table*}

\section{Application to Image Denoising}
\label{sec:denoising}

A preliminary version of our filter bank transform learning framework has been applied in an
``adaptive'' manner for magnetic resonance imaging\cite{Pfister2015}. Here, we restrict our
attention to image denoising in the ``universal'' paradigm. We wish to recover a clean image $x^*$
from a noisy copy, $y = x^* + e$, where $e \sim \mathcal{N}(0, \sigma^2 I_N)$. Let $\mathcal{H}$ be
a pre-trained sparsifying filter bank. We consider two methods for image denoising.

\subsection{Iterative denoising}
The first is to solve a regularized inverse problem using a transform sparsity penalty, written as
\begin{equation}
  \min_{x,z} \frac{\lambda_r}{2} \norm{y - x}_2^2 +
  \frac{1}{2} \norm{\mathcal{H} x - z}_2^2 + \nu \psi(z),
\end{equation}
where $\lambda_r>0$ controls the regularization strength. We solve this problem by alternating
minimization: we update $z$ for fixed $x$, and then update $x$ with $z$ fixed. This procedure is
summarized as Algorithm \ref{alg:iterative_denoise}. The eigenvalue decomposition of Lemma
\ref{lem:spectrum} provides an easy way to compute the necessary matrix inverse for cyclic
convolution filter banks. For linear convolution filter banks, we use Lemma \ref{lem:spectrum}
to implement a circulant preconditioner \cite{Chan1994}.

Algorithm \ref{alg:iterative_denoise} has three key parameters. The regularization parameter
$\lambda_r$ reflects the degree of confidence in the noisy observations $y$ and should be chosen
inversely proportional to the noise variance. The sparsity of the transform sparse code is controlled
by $\nu$.  The value of $\nu$ when denoising an image need not be the same as $\nu$
during the learning procedure and should be proportional to $\sigma$. The choice of both $\nu$ and
$\lambda_r$ depends on the final parameter: the number of iterations used during denoising.
Empirically, we've found that using $\mathrm{ceil}\left\{\sigma \cdot 255 / 10 \right\}$ iterations works
well; \ie, for $\sigma = 30/255$, we use $3$ iterations.

\begin{algorithm}
  \caption{Iterative denoising with filter bank transform}
  \label{alg:iterative_denoise}
  \begin{algorithmic}[1]
    \INPUT Noisy signal $y$, Learned filter bank transform $\mathcal{H}$
    \State $k \gets 0$
    \Repeat
    \State $z^{(k+1)} \gets \prox{\psi}{\mathcal{H} x^{(k)}}{\nu}$
    \State $x^{(k+1)} \gets (\mathcal{H}^* \mathcal{H} + \lambda_r I)^{-1} (\mathcal{H}^*
    z^{(k)} + \lambda_r y)$
    \State $k \gets k + 1$
    \Until{Halting Condition}
  \end{algorithmic}
\end{algorithm}

\subsection{Denoising by Transform-domain Thresholding}
We also consider a simpler algorithm, inspired by the transform domain denoising techniques of old.
We can form a denoised estimate by passing $y$ through the system in Figure
\ref{fig:filterbank_block_diagram}; that is, computing
\begin{equation}
  \label{eq:threshold_denoising}
  \mathcal{H}^\dagger \prox{\psi}{\mathcal{H} y}{\nu}.
\end{equation}
This approach simplifies denoising by eliminating two parameters from Algorithm
\ref{alg:iterative_denoise}: the number of iterations and $\lambda_r$.

Denoising in this manner is sensible because of the properties we have imposed on $\mathcal{H}$.
Noise in the signal will not be sparse in the transform domain and thus will be reduced by the nonlinearity.
The left-inverse is guaranteed to exist, and as $\mathcal{H}$ must be well-conditioned,
there will be little noise amplification due to the pseudoinverse. Finally, if $\mathcal{H}$ has low
coherence, the transformed noise $\mathcal{H}e$ will not be correlated across channels, suggesting
that a simple channelwise nonlinearity is sufficient. A multi-channel nonlinearity may be beneficial
if the transform is highly coherent.

\section{Experiments}
\label{sec:experiments}
We implemented a CPU and GPU version of our algorithms using NumPy 1.11.3 and SciPy
0.18.1.  Our GPU code interfaces with Python through PyCUDA and scikits.cuda, and implemented on an
NVidia Maxwell Titan X GPU.

We conducted training experiments using the five training images in Figure \ref{fig:training}. Each
image, in testing and training, was normalized to have unit $\ell_2$ norm. Unless otherwise
specified, our transforms were learned using $1000$ iterations of Algorithm \ref{alg:fbst} with
parameters $\mu=3.0$, $\lambda=7 \times 10^{-4}$, and $\nu = 5.5 \times 10^{-3}$. Sparsity was
promoted using an $\ell_0$ penalty, for which the prox operator corresponds to hard thresholding.

The initial transform $\mathcal{H}^{(0)}$ must be feasible, \ie left-invertible.
Random Gaussian and DCT initializations work well in practice.
We learned a $64$ channel filter bank with $8 \times 8$ filters using these initializations, and
learned filters are shown in Figure \ref{fig:dct_random}. The learned filters appear
similar and perform equally well in terms of sparsification and denoising.

More examples of learned filters and their magnitude frequency responses are shown in Figure
\ref{fig:filters}. We show a subset of channels from a filter bank consisting of $16 \times 16$
filters and $128$ channels. When viewed as a patch based transform, this filter bank would be
$2\times$ \emph{under}-complete. The ability to choose longer filters independent of the
number of channels is a key advantage of our framework over existing transform learning approaches.

\begin{figure}[t]
\begin{center}
  \includegraphics[scale=0.4]{}
\end{center}
\caption{Training images.}
\label{fig:training}
\end{figure}

\begin{figure}[ht]
  \begin{subfigure}{0.45\columnwidth}
    \includegraphics[scale=0.3]{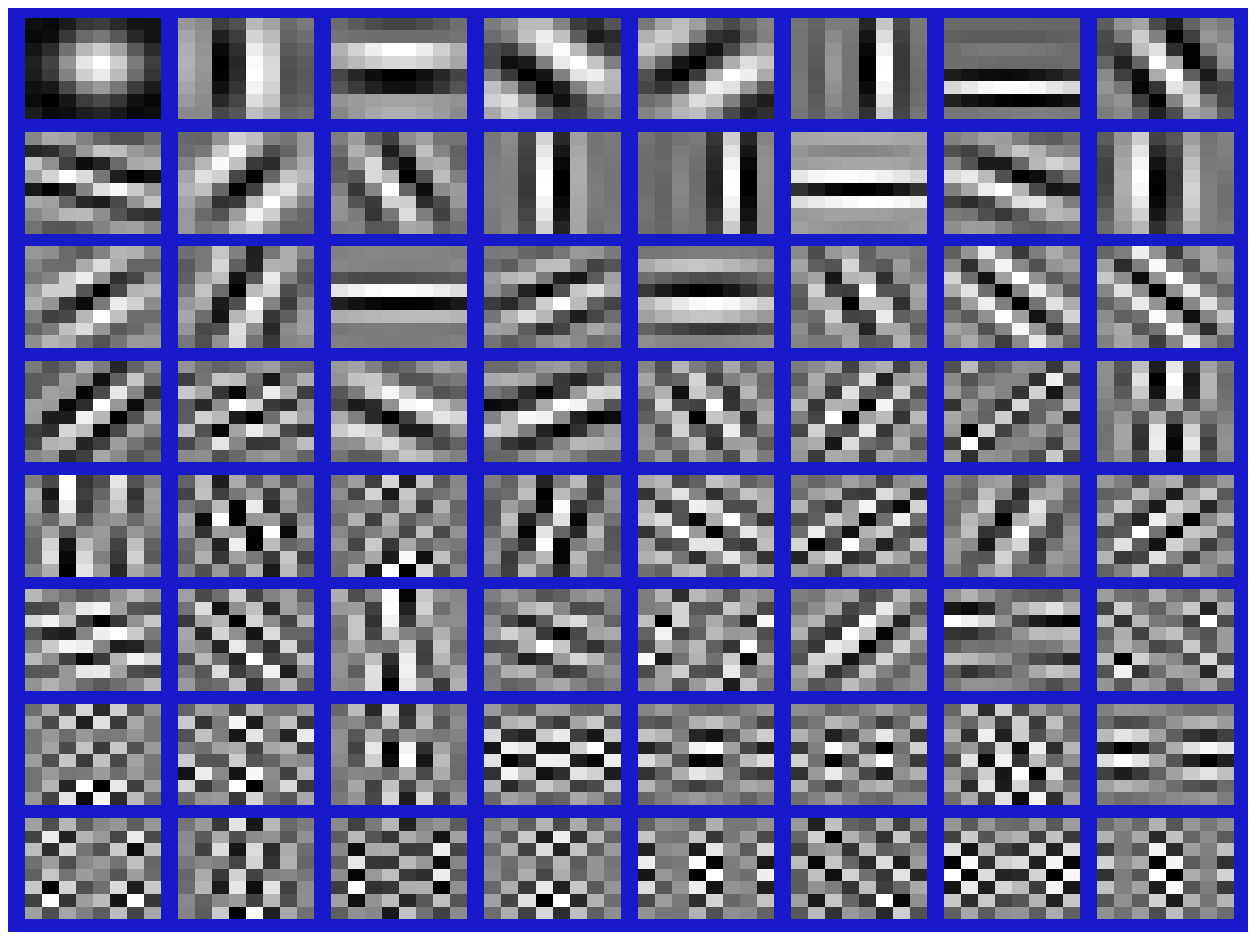}
    \caption{}
  \end{subfigure}
  \begin{subfigure}{0.45\columnwidth}
    \includegraphics[scale=0.3]{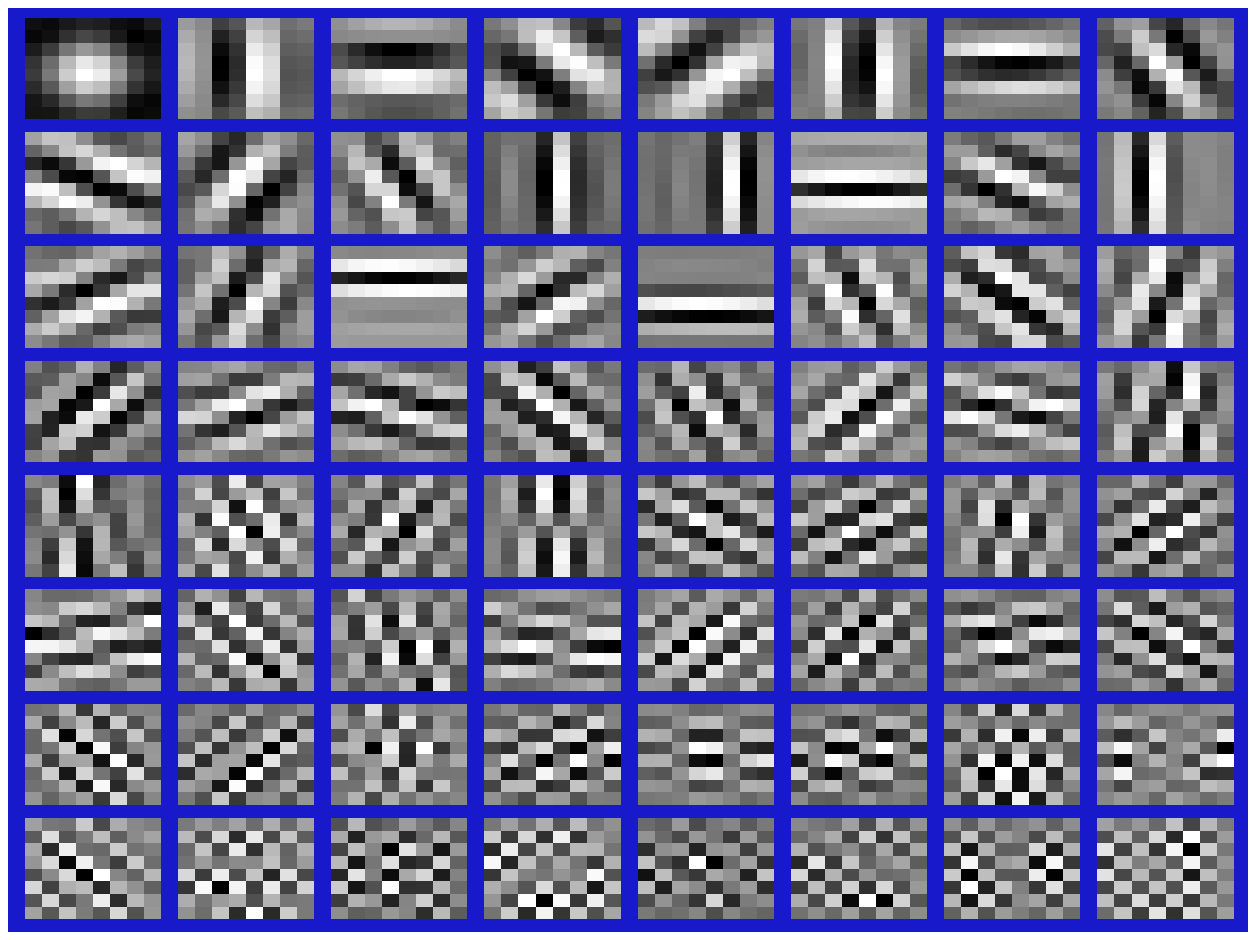}
    \caption{}
  \end{subfigure}
  \caption{
    Comparing initializations for $64$ channel filter bank with $8 \times 8$ filters.
    (a) Learned filters using random initialization;  (b) Filters learned with
    DCT initialization.}
  \label{fig:dct_random}
\end{figure}

\begin{figure}[t]
  \centering
  \begin{subfigure}{0.45\columnwidth}
    \centering
    \includegraphics[scale=0.35]{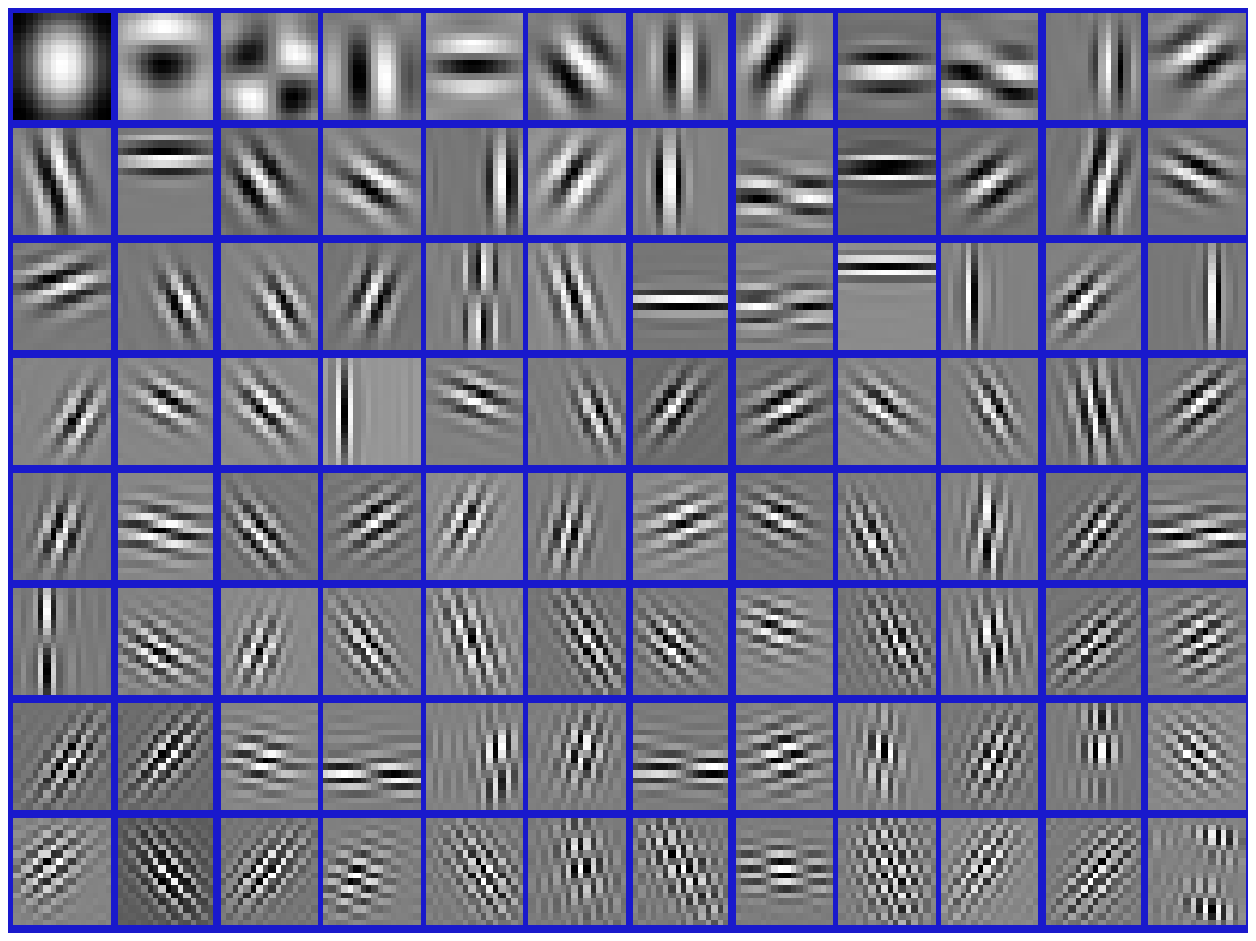}
    \caption{}
  \end{subfigure}
  \begin{subfigure}{0.45\columnwidth}
    \centering
    \includegraphics[scale=0.35]{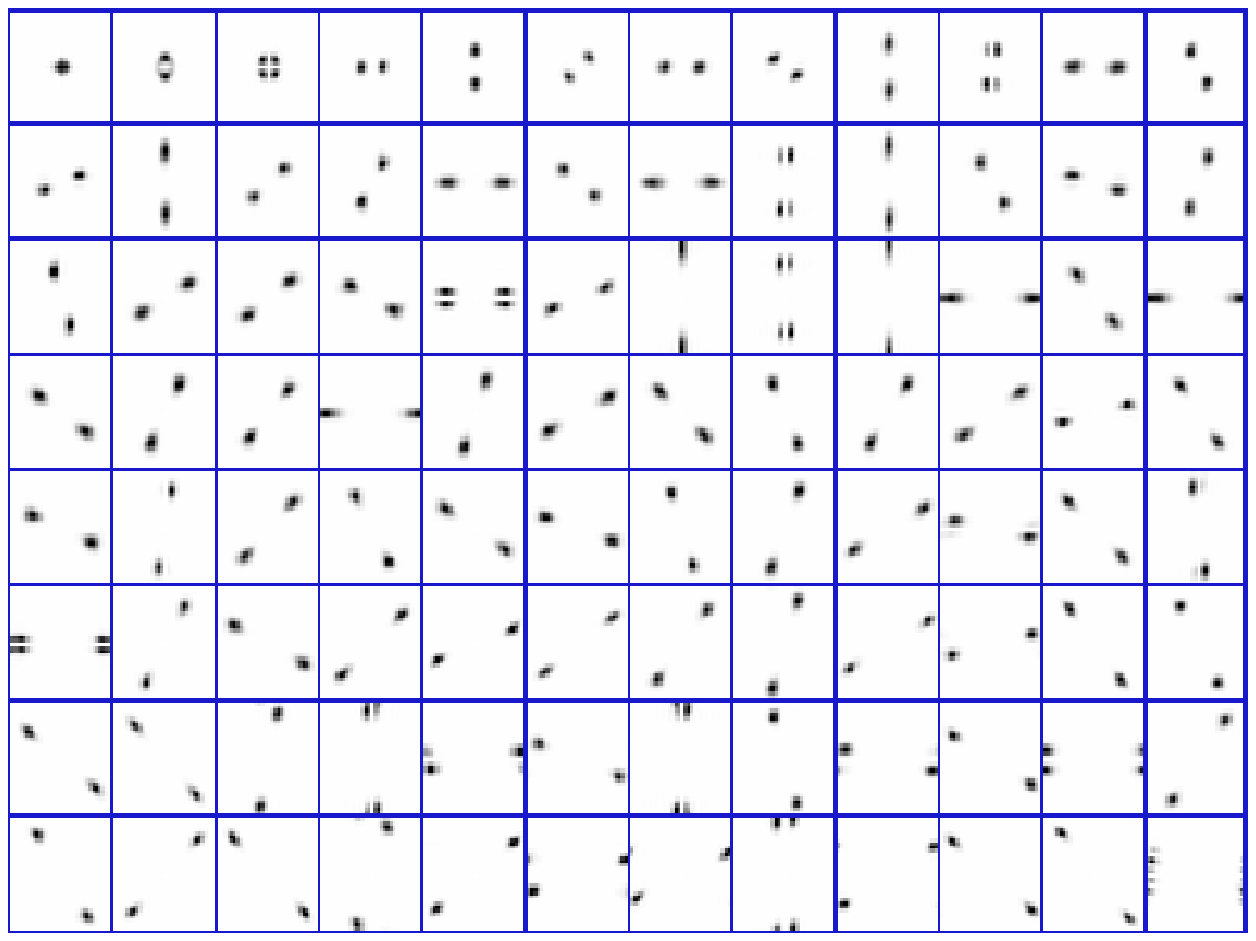}
    \caption{}
  \end{subfigure}
\caption{Examples of learned $16 \times 16$ filters. (a) Filter impulse responses; (b) Magnitude frequency responses.  The zero frequency (DC) is located at the center of each small box.}
\label{fig:filters}
\end{figure}

\begin{figure}[t]
  \centering
  \begin{subfigure}{0.45\columnwidth}
    \raisebox{2mm}{\includegraphics[scale=0.73]{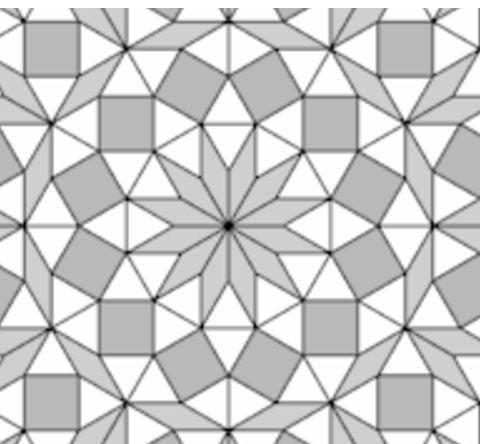}}
    \caption{}
  \end{subfigure}
  \begin{subfigure}{0.45\columnwidth}
    \includegraphics[scale=0.40]{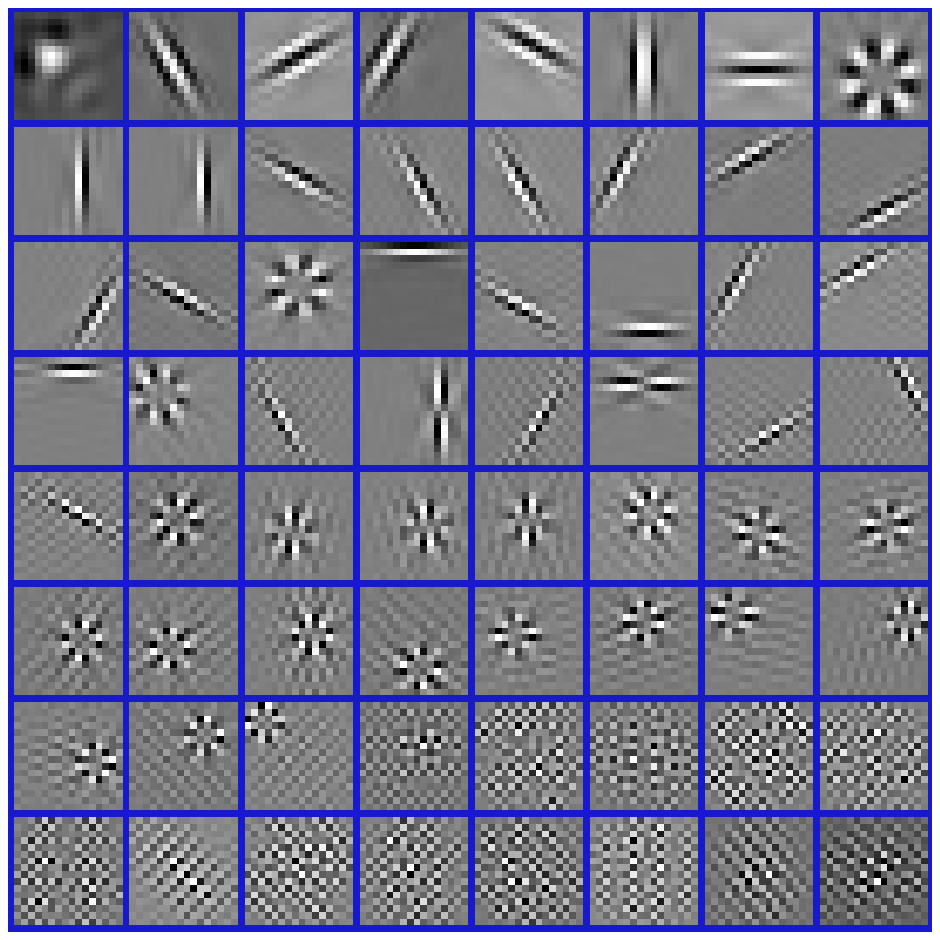}
    \caption{}
  \end{subfigure}
  \caption{Adaptivity of filters. (a) Training image; (b) Learned filters. }
  \label{fig:geometric_filters}
\end{figure}

\subsection{Image denoising}
We investigate the denoising performance of the filter bank sparsifying transforms as a function of
number of channels, $N_c$, and filter size, $K$, using our two algorithms. Our metric of interest is
the peak signal-to-noise ratio (PSNR) between the reconstructed image $x$ and the ground truth image
$x^*$, defined in decibels as $\textrm{PSNR} = 20 \log_{10} (N^2 / \norm{x - x^*}_2)$ . We evaluate
the denoising performance of our algorithm on the grayscale \texttt{barbara}, \texttt{man},
\texttt{peppers}, \texttt{baboon} and \texttt{boat} images.

We compare the denoising performance of our algorithm against two competing methods: 
EPLL \cite{Zoran2011} and BM3D \cite{Dabov2007}. We also evaluate image denoising
using filters learned with the square, patch-based transform learning algorithm
\cite{Ravishankar2013b}. We used $8 \times 8$ and $16\times 16$ image patches to learn a patch based
transform $W$. We used the rows of $W$ to generate an undecimated filter bank and used this filter
bank to denoise using Algorithm \ref{alg:iterative_denoise} and \eqref{eq:threshold_denoising}. The
filter bank implements cyclic convolution and image patches are extracted using periodic boundary
conditions. Observe for this fourth benchmark, that only the number of channels, regularizer and
learning algorithm differ from the more general filter bank transforms developed in Section
\ref{sec:learning}. All sparsifying transforms and EPLL were trained under the ``universal''
paradigm using all possible (maximally overlapping) patches extracted from the set of images shown in
Figure \ref{fig:training}.

We refer to transforms learned using our proposed Filter Bank Sparsifying Transform formulation as
``FBST'', and to the patch-based sparsifying transforms learned by solving \eqref{eq:pbst_obj} as
``PBST''. We evaluate our filter bank learning formulation using $64$, $128$, and $256$ channels
with $8 \times 8$ and $16 \times 16$ filters. During the denoising stage, we set $\nu = 10^{-4} \times
0.1 \sigma$ and $\lambda_r$ was adjusted for the particular noise level.

Mean reconstruction PSNR over the entire test set is shown in Table \ref{tab:psnr_mean}. Here,
FBST-64-8 indicates a $64$ channel filter bank with $8 \times 8$ filters where we denoise using
transform domain thresholding \eqref{eq:threshold_denoising}, while FBST-128-16-I indicates the use
of a $128$ channel filter bank with $16\times 16$ filters, where we denoise using the iterative
minimization Algorithm \ref{alg:iterative_denoise}. Per-image reconstruction PSNR is given in Table
\ref{tab:psnr}.

We see that the FBST performs on par with EPLL and slightly worse than BM3D. While the difference
between $128$ and $64$ filters appears to be minimal, in each case the transform learned using our
filter bank model outperforms the transform learned using the patch based model. On average, the $8
\times 8$ filters performed as well or better than the longer filters, but for certain images such
as $\texttt{barbara}$, see significant improvement when using long filters. Increasing the number of
channels beyond the filter size $K^2$ provides marginal improvement. For low noise,
denoising by transform-domain thresholding and iterative denoising using Algorithm
\ref{alg:iterative_denoise} perform equally well. As the noise level increases, the iterative
denoising algorithm outperforms the simpler thresholding scheme.

Using $1000$ iterations to learn a $196$ channel filter bank with $16 \times 16$ filters with
Algorithm \ref{alg:fbst} took roughly $5$ minutes on our GPU. In contrast, a GPU implementation of
the square, patch-based transform learning algorithm learns a $256 \times 256$ patch based transform
over the same data set in under one minute. This illustrates the efficiency of the closed-form
transform update step available in the patch-based case \cite{Ravishankar2013c}. Our slower learning
algorithm is offset by the ability to choose $N_c < K^2$, and this leads to faster application of
the learned transform to data. Comparing FBST-128-16-I and PBST-256-16-I, the image-based transform
outperforms the patch-based transform by up to $0.3$ dB despite containing half as many channels.

\subsection{Learning on subset of patches}
One advantage of  patch-based formulation is that the model can be trained using a large set images
by randomly selecting a few patches from each image. We can use the same approach when
learning a sparsifying filter bank:  the data matrix $X$ in \eqref{eq:se} is formed by extracting
and vectorizing patches from many images.  We can no longer view $WX$ as a convolution.

We learned a transform using $200,000$ randomly extracted patches from the 
training images in Figure \ref{fig:training}.  The learned transform performed nearly identically to
a transform learned using all patches from the training images.

\subsection{Image Adaptivity}
To test the influence of the training set, we learned a filter bank using $256^2$ patches of size $8
\times 8$ chosen at random from the $200$ training images in the BSDS300 training set
\cite{Martin2001}. The learned filter bank consists of Gabor-like filters,
much like filter banks learned from the images in
Figure~\ref{fig:training}. Gabor-like filters are naturally promoted by the
regularizers $J_1$ and $J_2$: their narrow support in the frequency domain leads to low coherence,
and their magnitude responses can tile frequency space leading to a well-conditioned transform.

We wonder if we have regularized our learning problem so strongly that the data
no longer plays a role. Fortunately, this is not so: Figure \ref{fig:geometric_filters}
illustrates a $64$ channel filter bank of $16 \times 16$ filters learned from a highly symmetric and 
geometric image. The learned filters include oriented edge detectors, as in the natural image case,
but also filters with a unique structure that sparsify the central region of the image.

\section{Remarks}
\label{sec:remarks}
Adaptive analysis/transform sparsity based image denoising algorithms can be coarsely divided into
two camps: supervised and unsupervised. In both cases, one learns a signal model by minimizing
an objective function.

In the supervised case, this minimization occurs over a set of training data.  In a
denoising application one typically corrupts a clean input with noise, passes it through the
denoiser, and uses the difference between the clean and denoised signal to adapt various parts
denoising algorithm:  the analysis operator, thresholding parameters, mixture weights,
the number of iterations, and so on.  It is not necessary to regularize the learning procedure to
preclude degenerate solutions, such as a transform of all zeros;  such a transform would not perform
well at the denoising task, and thus would not be learned by the algorithm
\cite{Roth2009,Peyre2011,Chen2014}.

In the unsupervised case, the objective function has two components. The first is a surrogate for the
true, but unavailable, metric of interest. In this paper, we use the combination of
sparsity and sparsification error to act as a surrogate for reconstruction PSNR. The second part of
the objective is a regularizer that prevents degenerate solutions, as discussed in 
\cref{sub:transform,sub:analysis}. Even in the ``universal'' case, our learning is
essentially unsupervised, as the learning process is not guided by the denoising PSNR.

The TNRD algorithm \cite{Chen2016} is a supervised approach that resembles iterative denoising using
Algorithm \ref{alg:iterative_denoise}, but where the filter coefficients, nonlinearities, and
regularization parameter are allowed to vary as a function of iteration. However, the TNRD approach
has no requirements that the filters form a well-conditioned frame or have low coherence; ``poor''
filters are simply discouraged by the supervised learning process. Denoising with the TNRD algorithm
outperforms the learned filter bank methods presented here.

One may ask if it is necessary that the learned transform be a frame. Indeed, the matrix to be
inverted when denoising using Algorithm \ref{alg:iterative_denoise} is full-rank even if the filter
bank itself is not perfect reconstruction. The proposed regularizer, while less restrictive
than previous transform learning regularizers, may still overly constrain the set of learnable
sparsifying transforms. However, our highly regularized approach has a benefit of its own. Whereas
the TNRD algorithm is trained on hundreds of images, and can take upwards of days to train, our
algorithm can be applied to a single image and requires only a few minutes. The tradeoff offered by
the TRND algorithm is acceptable for image denoising tasks, as massive sets of natural images are
publicly available for use in machine learning applications. However, such data sets may not be available for
new imaging modalities, in which case a tradeoff closer to that offered by our filter bank learning
algorithm may be preferred. Finding a balance between our highly regularized and unsupervised approach
and competing supervised learning methods is the subject of ongoing work.

\section{Conclusions}
\label{sec:conclusion}
We have developed an efficient method to learn sparsifying transforms that are structured as
undecimated, multi-dimensional perfect reconstruction filter banks. Unlike previous transform
learning algorithms, our approach can learn a transform with fewer rows than columns. We anticipate
this flexibility will be important when learning a transform for high dimensional data. Numerical
results show our filter bank sparsifying transforms outperform existing patch-based methods in image
denoising. Future work might fully embrace the filter bank perspective and learn filter bank
sparsifying transforms with various length filters and/or non-square impulse responses. It would
also be of interest to develop an accompanying algorithm to learn multi-rate filter bank sparsifying
transforms.

\appendices
\section{}
\label{app:conv_proof}
We explicitly show the link between filter banks and applying a sparsifying transform to a patch
matrix. We assume a 1D signal $x \in \Rbb^N$ to simplify notation. The extension to multiple
dimensions is tedious, but straightforward.

Let $W \in \Rbb^{N_c \times K}$ be a given transform, and let $w^i$ indicate the $i$-th row of this
matrix. Suppose we extract patches with a patch stride of $s$ and we assume $s$ evenly divides $N$.
The $j$-th column of the patch matrix $X \in \Rbb^{K \times M}$ is the vector
$[x_{sj + K - 1}, x_{sj + K - 2}, \hdots, x_{sj}]^T$.
The number of columns, $M$, depends on the boundary conditions used.
Linear and circular convolution are obtained by setting
$x_{i} = 0$ or $x_{i} = x_{N - i - 1}$, respectively, when $i < 0 $.  For cyclic convolution, we
have $M = N / s$.  The $i, j$-th element of the sparsified signal $WX$ is
\begin{align}
  [WX]_{i,j} &= \sum_{k=1}^{K} W_{i, k} X_{k, j} = \sum_{k=1}^K W_{i, k} x_{sj + K - 1 - i} \\
  &= (w^i * x)[sj + K - 1].
\end{align}
Thus the $i$-th row of $WX$ is the convolution between the filter with impulse response $w^i$ and
signal $x$, followed by
downsampling by a factor of $s$, and shifted by $K-1$. The filter bank has $N_c$ channels
with impulse responses given by the rows of $W$.  The shift of $K-1$ can be incorporated into the
definition of the patch extraction procedure.  For 1D signals, the ``first'' patch should be
$[x_{K-1}, \hdots x_{0}]^T$,  while for 2D signals, the lower-right pixel of the
  ``first'' patch is $x[0, 0]$.

\section{Proof of Proposition \ref{prop:reg_min}}
\label{app:norm_cond_proof} The function $J_1(W)$ in \eqref{eq:reg_no_coherence}  acts only on the magnitude
responses of the filters in $\mathcal{H}$. Let $V \triangleq \abs{\barQ W^T}^2 \in
\Rbb^{N_F^2 \times N_c}$. The sum of the $i$-th column of $V$ is equal to the norm
of the $i$-th filter and, by Lemma \ref{lem:spectrum}, the eigenvalues of
$\mathcal{H}^* \mathcal{H}$ are equal to the row sums of $V$. Thus, $V$ is
generated by a UNTF if and only if the row sums and column sums are
constant.

Let $V^*$ be a stationary point of $J_1$. For each $1
\leq r \leq N_F^2$ and $1 \leq s \leq N_c$, we have
\begin{align}
  \label{eq:opt_cond}
  \frac{\partial}{\partial V_{r, s}} J_1(V^*) =  \frac{1}{2} - \frac{1}{\sum_{j=1}^{N_c} V^*_{r, j}} - \frac{1}{\sum_{i=1}^{N_F^2} V^*_{i,s}} = 0.
\end{align}
Note that $J_1(V) = +\infty$ if either a row or column of $V$ is
identically zero, so $V^*$ is a minimizer only if there is at least one
non-zero in each row and column of $V^*$.
Subtracting $\frac{\partial}{\partial V_{r^\prime, s}} J_1(V^*)$ from
$\frac{\partial}{\partial V_{r, s}} J_1(V^*)$ yields
$\sum_{j=1}^{N_c} V^*_{r, j} = \sum_{j=1}^{N_c} V^*_{r^\prime, j} \triangleq a$.
Similarly, subtracting $\frac{\partial}{\partial V_{r, s}} J_1(V^*)$ from
$\frac{\partial}{\partial V_{r, s^\prime}} J_1(V^*)$ yields $\sum_{i=1}^{N_F^2}
V^*_{i, s} = \sum_{i=1}^{N_F^2} V^*_{i, s^\prime} \triangleq b$. As the row and
column sums are uniform for each $r$ and $s$, we conclude $V^*$ is a UNTF.
Next, we have
\begin{align}
  \sum_{i=1}^{N_F^2} \sum_{j=1}^{N_c} V^*_{i,j} &= \sum_{i=1}^{N_F^2} \left( \sum_{j=1}^{N_c} V^*_{i,j} \right) = N_F^2 a \\
  &= \sum_{j=1}^{N_C} \left( \sum_{i=1}^{N_F^2} V^*_{i,j} \right) = N_c b,
\end{align}
from which we conclude $b = \frac{N_F^2}{N_c} a$.
Substituting into \eref{eq:opt_cond}, we find
\begin{align}
  a = 2\left(1 + \frac{N_c}{N_F^2}\right), \quad \quad b = 2 \left(\frac{N_F^2}{N_c} + 1\right),
\end{align}
and this completes the proof.

\setlength{\tabcolsep}{.2em}
\renewcommand{\arraystretch}{1}
\begin{table*}[ht]
  \caption{Mean reconstruction PSNR for the test images \texttt{barbara}, \texttt{man},
    \texttt{peppers}, \texttt{baboon} and \texttt{boat}, averaged over $10$ noise realizations.
    FBST-128-16 indicates a filter bank sparsifying transform with $128$ channels and $16 \times 16$
    filters and denoised according to \cref{eq:threshold_denoising}. The -I suffix indicates
    denoising with the iterative \cref{alg:iterative_denoise}. \vspace{-2mm}}
  \centering
  \begin{tabular}{| c || c | c | c | c | c | c | c | c | c | c | c |}
    \hline
    $\sigma$       & BM3D  &  EPLL  & FBST-64-8   & FBST-128-8   & FBST-196-8   & FBST-64-16   & FBST-128-16   & FBST-196-16   & PBST-64-8   & PBST-256-16\\ \hline
10             & 33.60 &  33.26 &  33.32      &  33.34       &  33.32       &  33.28       &  33.34        &  33.35        &  33.13      &  33.28 \\ \hline
 20            & 30.42 &  30.01 &  29.79      &  29.83       &  29.81       &  29.78       &  29.92        &  29.95        &  29.55      &  29.71 \\ \hline
 30            & 28.61 &  28.15 &  27.77      &  27.83       &  27.80       &  27.74       &  27.96        &  28.03        &  27.55      &  27.65 \\ \hline
\hline$\sigma$ & BM3D  &  EPLL  & FBST-64-8-I & FBST-128-8-I & FBST-196-8-I & FBST-64-16-I & FBST-128-16-I & FBST-196-16-I & PBST-64-8-I & PBST-256-16-I\\ \hline
10             & 33.60 &  33.26 &  33.41      &  33.40       &  33.37       &  33.41       &  33.44        &  33.45        &  33.13      &  33.28 \\ \hline
 20            & 30.42 &  30.01 &  30.03      &  30.02       &  29.97       &  30.05       &  30.13        &  30.16        &  29.74      &  29.85 \\ \hline
 30            & 28.61 &  28.15 &  28.10      &  28.10       &  28.01       &  28.13       &  28.27        &  28.31        &  27.82      &  27.89 \\ \hline
  \end{tabular}
  \label{tab:psnr_mean}
\end{table*}

\setlength{\tabcolsep}{.5em}
  \renewcommand{\arraystretch}{1.1}
\begin{table*}[ht]
  \caption{Per-image reconstruction PSNR averaged over $10$ noise realizations.
    \vspace{-2mm}}
  \centering
  \begin{tabular}{|c||c|c|c||c|c|c||c|c|c||c|c|c||c|c|c|}
    \hline
    $\sigma$ & $10$ & $20$ & $30$
    & $10$ & $20$ & $30$
    & $10$ & $20$ & $30$
    & $10$ & $20$ & $30$
    & $10$ & $20$ & $30$ \\ \hline
    Input PSNR & $28.13$ & $22.11$ & $18.59$
               & $28.13$ & $22.11$ & $18.59$
               & $28.13$ & $22.11$ & $18.59$
               & $28.13$ & $22.11$ & $18.59$
               & $28.13$ & $22.11$ & $18.59$ \\ \hline \hline

    \textbf{Method} &\multicolumn{3}{c||}  {\textbf{ baboon }} &\multicolumn{3}{c||}  {\textbf{ barbara }} &\multicolumn{3}{c||}  {\textbf{ boat }} &\multicolumn{3}{c||}  {\textbf{ man }} &\multicolumn{3}{c||}  {\textbf{ peppers }} \\ \hline 
BM3D&  30.47 &  26.45 &  24.40 &  34.95 &  31.74 &  29.78 &  33.90 &  30.84 &  29.04 &  33.94 &  30.56 &  28.81 &  34.75 &  32.51 &  31.01 \\ \hline 
EPLL&  30.49 &  26.56 &  24.54 &  33.63 &  29.81 &  27.61 &  33.64 &  30.68 &  28.90 &  33.97 &  30.62 &  28.82 &  34.58 &  32.36 &  30.88 \\ \hline 
FBST-64-8  &  30.33  &  26.12  &  23.99  &  34.36  &  30.52  &  28.20  &  33.66  &  30.30  &  28.33  &  33.70  &  30.13  &  28.19  &  34.53  &  31.90  &  30.12  \\ \hline 
PBST-64-8  &  30.27  &  25.99  &  23.83  &  34.00  &  30.02  &  27.70  &  33.54  &  30.18  &  28.25  &  33.44  &  29.89  &  28.03  &  34.40  &  31.67  &  29.92  \\ \hline 
FBST-128-16  &  30.30  &  26.13  &  24.03  &  34.60  &  30.99  &  28.80  &  33.65  &  30.34  &  28.43  &  33.60  &  30.10  &  28.20  &  34.56  &  32.01  &  30.34  \\ \hline 
PBST-256-16  &  30.37  &  26.12  &  23.94  &  34.21  &  30.32  &  28.01  &  33.63  &  30.21  &  28.23  &  33.63  &  29.97  &  27.97  &  34.57  &  31.90  &  30.12  \\ \hline 
FBST-64-8-I  &  30.41  &  26.34  &  24.24  &  34.42  &  30.76  &  28.56  &  33.78  &  30.52  &  28.64  &  33.79  &  30.30  &  28.42  &  34.65  &  32.22  &  30.62  \\ \hline 
PBST-64-8-I  &  30.19  &  26.09  &  23.99  &  34.15  &  30.33  &  28.10  &  33.44  &  30.24  &  28.40  &  33.56  &  30.08  &  28.25  &  34.34  &  31.94  &  30.38  \\ \hline 
FBST-128-16-I  &  30.39  &  26.34  &  24.31  &  34.72  &  31.26  &  29.21  &  33.75  &  30.55  &  28.70  &  33.66  &  30.23  &  28.39  &  34.66  &  32.28  &  30.73  \\ \hline 
PBST-256-16-I  &  30.32  &  26.15  &  24.05  &  34.25  &  30.55  &  28.29  &  33.61  &  30.22  &  28.36  &  33.66  &  30.20  &  28.24  &  34.57  &  32.10  &  30.50  \\ \hline 

  \end{tabular}
  \label{tab:psnr}
  \end{table*}

\label{sec:refs}
\bibliographystyle{myIEEEtran}
\bibliography{IEEEabrv,/home/luke/research/bib/jabref}



\end{document}